\DeclareMathOperator*{\argmax}{arg\,max}
\theoremstyle{plain}
\newtheorem{theorem}{Theorem}[section]
\newtheorem{lemma}[theorem]{Lemma}
\title{When to Ask for Help: Proactive Interventions in Autonomous Reinforcement Learning}
\author{%
  Annie Xie\thanks{equal contribution}, Fahim Tajwar\footnotemark[1], Archit Sharma\footnotemark[1], Chelsea Finn \\
  Department of Computer Science\\
  Stanford University\\
  \texttt{\{anniexie,tajwar93,architsh,cbfinn\}@stanford.edu}
}
\begin{document}

\maketitle


\begin{abstract}
A long-term goal of reinforcement learning is to design agents that can autonomously interact and learn in the world. A critical challenge to such autonomy is the presence of irreversible states which require external assistance to recover from, such as when a robot arm has pushed an object off of a table. While standard agents require constant monitoring to decide when to intervene, we aim to design proactive agents that can request human intervention only when needed. To this end, we propose an algorithm that efficiently learns to detect and avoid states that are irreversible, and proactively asks for help in case the agent does enter them. On a suite of continuous control environments with unknown irreversible states, we find that our algorithm exhibits better sample- and intervention-efficiency compared to existing methods. Our code is publicly available at \url{https://sites.google.com/view/proactive-interventions}.
\end{abstract}

\begin{figure*}[h]
    \centering
    \includegraphics[width=\linewidth]{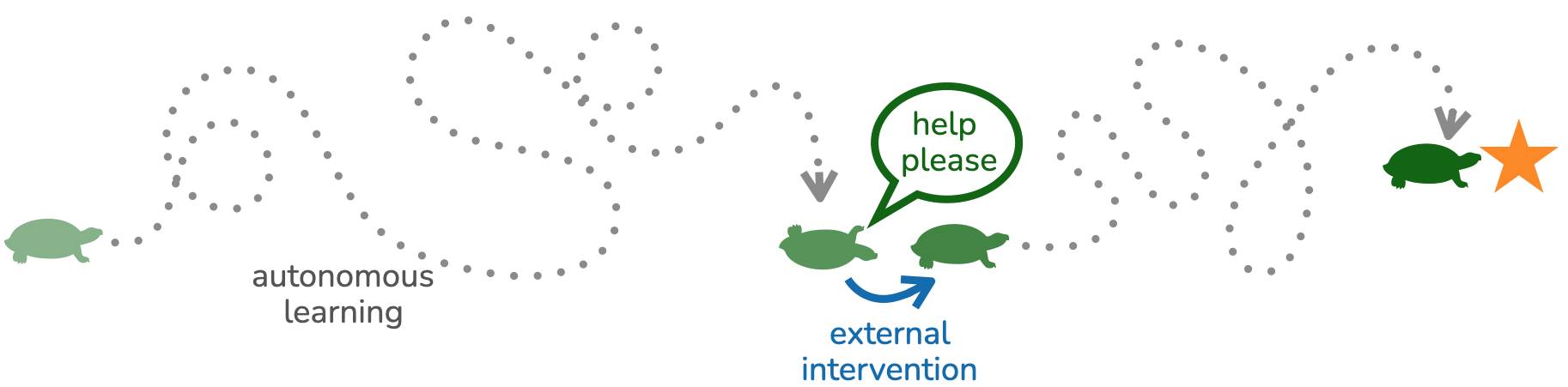}
    \caption{\small Autonomous agents struggle to make progress without external interventions when they are stuck in an irreversible state. Reinforcement learning agents therefore need active monitoring throughout training to detect and intervene when the agent reaches an irreversible state. Enabling the agents to detect irreversible states and proactively request for help can substantially reduce the human monitoring required for training agents.}
    \label{fig:teaser}
\end{figure*}

\section{Introduction}
\label{sec:introduction}

A reinforcement learning (RL) agent should be able to autonomously learn behavior by exploring in and interacting with its environment. However, in most realistic learning environments, there are irreversible states from which the agent cannot recover on its own. For example, a robot arm can inadvertently push an object off the table, such that an external supervisor must return it back to the robot’s workspace to continue the learning process. Current agents demand constant monitoring to decide when the agent enters an irreversible state and therefore when to intervene. In this work, we aim to build greater autonomy into RL agents by addressing this problem. In particular, we envision proactive agents that can instead detect irreversible states, proactively request interventions when needed, and otherwise learn autonomously.

Prior works have studied autonomy in RL, aiming to minimize the number of human-provided resets at the end of each episode, but generally assume the environment is fully reversible~\citep{han2015learning,zhu2020ingredients,sharma2021autonomous,sharma2021autonomouscurr,Gupta2021ResetFreeRL}.
Our work focuses on settings with potential irreversible states and algorithms to avoid such states. A related
desiderata, however, arises in the safe RL setting; safe RL methods aim to learn policies that minimize visits to unsafe states, and the developed approaches are designed to avoid those particular parts of the state space~\citep{achiam2017constrained,chow2017risk,tessler2018reward,srinivasan2020learning,wagener2021safe,thomas2021safe}.
Prior safe RL algorithms assume that the agent is given a safety label on demand for \textit{every} state it visits. In contrast, an autonomous agent may not know when it has reached an irreversible state (such as knocking an important object off a table), and an algorithm in this setting should instead learn to both detect and avoid such states, while minimizing queries about whether a state is reversible.

With this in mind, we design our setup to provide help to the agent in two ways: through an environment reset or through the reversibility label of a state.
However, unlike in safe RL, we can reduce the labeling requirement with a simple observation: all states \emph{proceeding} an irreversible state are irreversible, and all states \emph{preceding} a reversible state will be reversible. Based on this observation, we design a scheme based on binary search to generate reversibility labels for a trajectory of length $T$ using at most $\mathcal{O}(\log T)$ label queries, compared to the $\mathcal{O}(T)$ queries made by safe RL methods. We further reduce labeling burden by only querying labels in a large batch at the end of each extended episode, i.e. typically only after tens of thousands of steps. By combining this label efficient scheme with proactive requests for an intervention and batch of labels, we can enable agents to learn amidst irreversible states with a high degree of autonomy. 

Concretely, we propose a framework for reversibility-aware autonomous RL, which we call \emph{proactive agent interventions (PAINT)}, that aims to minimize human monitoring and supervision required throughout training. First, we train a reversibility-aware $Q$-value function that penalizes visits to irreversible states. Second, the reversibility labels are generated by a label-efficient binary search routine, which makes at most a logarithmic number of queries in the length of the interaction with the environment. Finally, the labeled states can be used to learn a classifier for predicting irreversible states, which can then be leveraged to proactively call for interventions. Our proposed framework PAINT can be used to adapt any value-based RL algorithm, in both episodic and non-episodic settings, to learn with minimal and proactive interventions.
We compare PAINT to prior methods for autonomous RL and safe RL on a suite of continuous control tasks, and find that PAINT exhibits both better sample- and intervention-efficiency compared to existing methods. On challenging autonomous object manipulation tasks, PAINT only requires around $100$ interventions while training for $3$ million steps, which is up to $15\times$ fewer than those required by prior algorithms.


\section{Related Work}
\label{sec:related_work}

Deployment of many RL algorithms in physical contexts is challenging, because they fail to avoid undesirable states in the environment and require human-provided resets between trials. Safe RL, reversibility-aware RL, and autonomous RL, which we review next, address parts of these problems.

\textbf{Safe RL.} 
The goal of our work is to learn to avoid irreversible states. Algorithms for safe RL also need to avoid regions of the state space, and achieve this by formulating a constrained optimization problem~\citep{chow2017risk,tessler2018reward,srinivasan2020learning,zanger2021safe} or by assigning low rewards to unsafe states~\citep{wagener2021safe,thomas2021safe}. 
Another class of algorithms construct shielding-based policies that yield control to a backup policy if following the learning policy leads to an unsafe state~\citep{achiam2017constrained,alshiekh2018safe,turchetta2020safe,thananjeyan2020recovery,bharadhwaj2020conservative,wagener2021safe,bastani2021safe}. 
Critically, however, all of these approaches assume that safety labels for each state can be queried freely at every time-step of training, whereas our objective is to minimize labeling requirements over training.

\textbf{Reversibility-aware RL.} Reversibility and reachability have been studied in the context of RL to avoid actions that lead to irreversible states~\citep{kruusmaa2007don,krakovna2018penalizing,rahaman2020learning,grinsztajn2021there} or, conversely, to guide exploration towards difficult-to-reach states~\citep{savinov2018episodic,badia2020never}. 
Unlike prior work, our study of reversibility primarily focuses on the non-episodic setting to minimize the number of human interventions during learning. 
While prior methods are self-supervised, our experiments also find that our algorithm learns with significantly fewer interventions than prior methods by leveraging some binary reversibility labels.

\textbf{Autonomous RL.} Multiple prior works have also studied autonomy in RL, motivated by the fact that deployments of RL algorithms on real robots often require human-provided resets between episodes~\citep{finn2016deep,gu2017deep,ghadirzadeh2017deep}. To avoid the supervision needed for episodic resets, prior work has proposed to learn controllers to return to specific state distributions, such as the initial state~\citep{han2015learning,eysenbach2017leave}, the uniform distribution over states~\citep{zhu2020ingredients} or demonstration states~\citep{sharma2022state}, adversarially learned distributions~\citep{Xu2020ContinualLO}, or curriculum-based distributions~\citep{sharma2021autonomouscurr}.
However, most work in the reset-free setting assumes the agent's environment is reversible~\citep{moldovan2012safe,sharma2021autonomous,sharma2021autonomouscurr,Gupta2021ResetFreeRL}, whereas we specifically tackle the setting where this is not the case. 
One notable exception is the Leave No Trace algorithm~\citep{eysenbach2017leave}, which checks whether the agent has successfully returned back to the initial state distribution and requests an intervention otherwise.
Our approach differs from Leave No Trace by requesting a reset based on the estimated reversibility of the state, which we find requires significantly fewer interventions in our evaluation.

\textbf{Human-in-the-loop learning.} Learning from human feedback has enabled RL agents to acquire complex skills that are difficult to encode in a reward function~\citep{knox2009interactively,macglashan2017interactive,wang2018interactive,faulkner2020interactive,arzate2020survey}. However, interactive RL algorithms are often to difficult to scale as they rely on feedback at every time-step of training. More feedback-efficient algorithms have learned reward models from human-provided preferences~\citep{akrour2011preference,sugiyama2012preference,wirth2013preference,sadigh2017active,biyik2018batch,lee2021pebble,wang2022skill}, which removes the need for constant feedback. Similarly, the interactive imitation learning learning has seen more query-efficient algorithms, which query expert actions based on the estimated risk or novelty of a visited state~\citep{zhang2016query,menda2019ensembledagger,hoque2021lazydagger,hoque2021thriftydagger}. While these algorithms augment the agent with human-provided preferences or expert actions, our approach leverages a different mode of feedback, that is, reversibility labels for visited states. 


\section{Reinforcement Learning in Irreversible Environments}


Consider a Markov decision process ${\mathcal{M} = (\mathcal{S}, \mathcal{A}, \mathcal{P}, r, \rho_0, \gamma)}$ with state space $\mathcal{S}$, action space $\mathcal{A}$, transition dynamics $\mathcal{P}: \mathcal{S} \times \mathcal{A} \times \mathcal{S} \mapsto [0, 1]$, bounded reward function $r: \mathcal{S} \times \mathcal{A} \mapsto [R_{\textrm{min}}, R_{\textrm{max}}]$, initial state distribution $\rho_0: \mathcal{S} \mapsto [0, 1]$ and discount factor $\gamma \in [0, 1)$. In this work, we build on the formalism of autonomous RL \citep{sharma2021autonomous}, but we remove the 
assumption that the environment is reversible, i.e., the MDP is no longer strongly connected (see example in ~\citep[Chapter~38]{lattimore2020bandit} and description below).
The environment is initialized at $s_0 \sim \rho$ and an algorithm continually interacts with the environment till it requests the environment to be reset via an external intervention to state $s_0' \sim \rho$.
Specifically, an algorithm ${\mathbb{A}:\{s_i, a_i, s_{i+1}, r_i\}_{i=0}^{t-1} \mapsto (a_t, \pi_t)}$ generates a sequence ${(s_0, a_0, s_1, \ldots) }$ in $\mathcal{M}$,  mapping the states, actions, and rewards seen till time $t-1$ to an action $a_t \in \mathcal{A} \cup \{a_\texttt{reset}\}$, and the current guess at the optimal policy ${\pi_t: \mathcal{S} \times \mathcal{A} \mapsto [0, \infty)}$. Here, $a_\texttt{reset}$ is a special action the agent can execute to reset the environment through extrinsic interventions, i.e. $\mathcal{P}\left(\cdot \mid s, a_{\texttt{reset}}\right) = \rho_0(\cdot)$.

\begin{wrapfigure}{r}{0.24\linewidth}
    \vspace{-0.3cm}
    \centering
    \includegraphics[width=\linewidth]{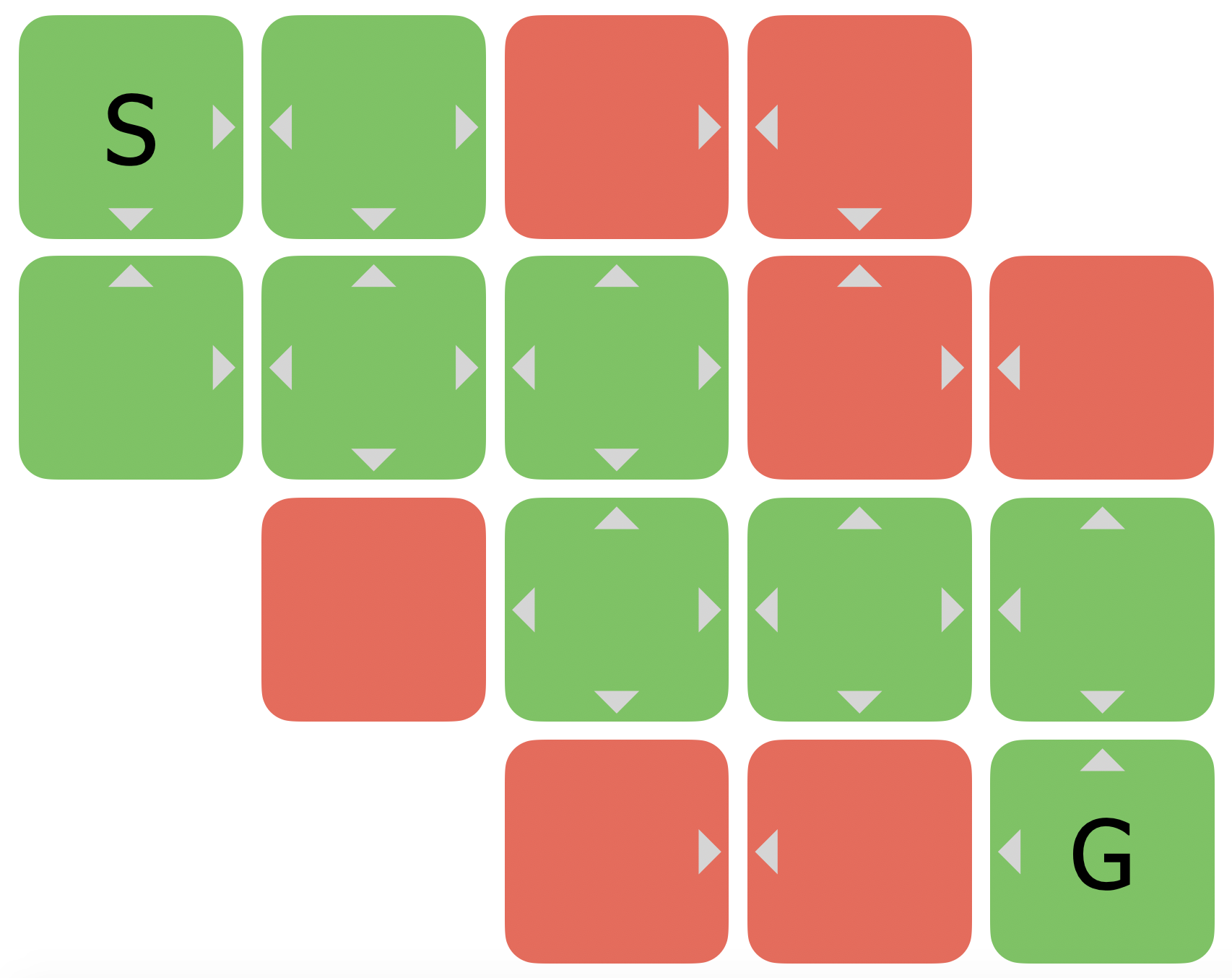}
    \caption{\small Example of an MDP with irreversible states (in red). The agent starts in the state `S' and its goal is to reach the state `G', which are connected.}
    \vspace{-0.4cm}
    \label{fig:example}
\end{wrapfigure}
A MDP is strongly connected if for all pairs of states $s_i, s_j \in \mathcal{S}$, there exists a policy $\pi$ such that $s_j$ has a non-zero probability of being visited when executing the policy $\pi$ from state $s_i$. This assumption can easily be violated in practice, for example, when a robot arm pushes an object out of its reach. At an abstract level, the agent has transitioned into a component of MDP that is not connected with the high reward states,
and thus cannot continue making progress, as visualized in Figure~\ref{fig:example}. The agent can invoke an extrinsic agent (such as a human supervisor)
through $a_{\texttt{reset}}$, and the extrinsic agent can reset the environment to a state from the initial state distribution. For example, the human supervisor can reset the object to the initial state, which is within the reach of the robot arm. For every state $s \in \mathcal{S}$, define $\mathcal{R}_\rho: \mathcal{S} \mapsto \{0,1\}$ as the indicator whether the state $s$ is in the same component as the initial state distribution.
State $s$ is defined to be reversible if $\mathcal{R}_\rho(s) = 1$, and
irreversible if $\mathcal{R}_\rho(s) = 0$. We assume that the $\mathcal{R}_\rho$ is unknown, but can be queried for a state $s$.

While we do not assume that the MDP is strongly connected, we assume that the states visited by the optimal policy are in the same connected component as the initial state distribution. Under this assumption, we can design agents that can autonomously practice the task many times. Otherwise, the environment would need to be reset after every successful trial of the task.

The objective is to learn an optimal policy $\pi^* \in \argmax_\pi J(\pi) = \argmax_\pi \mathbb{E}[\sum_{t=0}^\infty \gamma^tr(s_t, a_t)]$.
Note, $J(\pi)$ is approximated by computing the return when the policy is rolled out from $s_0 \sim \rho_0$. Algorithms are typically evaluated on the sample efficiency, that is minimizing ${\mathbb{D}(\mathbb{A}) = \sum_{t=0}^\infty J(\pi^*) - J(\pi_t)}$.
However, since we care about minimizing the human supervision required and resetting the environment can entail expensive human supervision, we will primarily evaluate algorithms on \textit{intervention-efficiency}, defined as $\mathbb{I}(\mathbb{A}) = \sum_{k=0}^\infty J(\pi^*) - J(\pi_k)$, where $\pi_k$ is the policy learned after $k$ interventions. 

\section{Preliminaries}
\label{sec:prelims}

Episodic settings reset the environment to a state from the initial state distribution after every trial, typically after every few hundred steps of interaction with the environment. Such frequent resetting of the environment entails an extensive amount of external interventions, typically from a human. Prior works on autonomous RL have sought to reduce the supervision required for resetting the environments by learning a backward policy that resets the environment~\citep{eysenbach2017leave, zhu2019dexterous, sharma2021autonomouscurr}. Meaningfully improving the autonomy of RL in irreversible environments requires us to curb the requirement of episodic resets first. While our proposed framework is compatible with any autonomous RL algorithm, we describe MEDAL~\citep{sharma2022state}, which will be used in our experiments.

MEDAL learns a forward policy $\pi_f$ and a backward policy $\pi_b$, alternately executed for a fixed number of steps in the environment. The forward policy maximizes the conventional cumulative task reward, that is $\mathbb{E}\left[ \sum_{t=0}^\infty \gamma^t r(s_t, a_t) \right]$, and the backward policy minimizes the Jensen-Shannon divergence $\mathcal{D}_{\textrm{JS}} \left(\rho^b(s) \mid\mid \hat{\rho}^*(s) \right)$ between the marginal state distribution of the backward policy $\rho^b$ and the state distribution of the optimal forward policy $\hat{\rho}^*$, approximated by a small number of expert demonstrations.
Thus, the backward policy keeps the agent close to the demonstration states, allowing the forward agent to try the task from a mix of easy and hard initial states.
The proposed divergence can be minimized via the objective 
    $\min_{\pi_b} \max_C \mathbb{E}_{s \sim \rho^*}\big[\log C(s)\big] + \mathbb{E}_{s \sim \rho^b}\big[\log (1-C(s))\big]$,
where $C: \mathcal{S} \mapsto [0, 1]$ is a classifier that maximizes the log-probability of states visited in the forward demonstrations, and minimizes the probability of the states visited by the backward policy. The optimization problem for the backward policy can be written as a RL problem:
\begin{align}
    \min_{\pi_b} \mathbb{E}_{s \sim \rho^b} \left[\log\left(1 - C(s)\right) \right] = \max_{\pi_b} \mathbb{E}\left[-\sum_{t=0}^\infty \gamma^t \log\left(1 - C(s)\right)\right]
    \label{eqn:medal}
\end{align}
where $\pi_b$ maximizes the reward function $r(s, a) = - \log \left(1 - C(s)\right)$. Correspondingly, $C(s)$ is trained to discriminate between states visited by the backward policy and the demonstrations.

\section{Proactive Agent Interventions for Autonomous Reinforcement Learning}
To minimize human monitoring and supervision when an agent is learning in an environment with irreversible states, the agent needs to (a) learn to avoid irreversible states over the course of training and (b) learn to detect and \textit{request} an intervention whenever the agent is stuck. For the former, we first describe a simple modification to the reward function to explicitly penalize visitation of irreversible states in Section~\ref{sec:framework}. However, such a modification requires the knowledge of reversibility of the visited states, which is not known apriori. We learn a classifier to estimate reversibility, proposing a label-efficient algorithm to query reversibility labels of visited states in Section~\ref{sec:label}. Since both the dynamics and the set of irreversible states are unknown apriori, the agent will inevitably still visit irreversible states as a part of the exploration. To ensure that a human does not have to monitor the agent throughout training, the agent should have a mechanism to decide and request for an intervention. We discuss such a mechanism in Section~\ref{sec:intervention}. Finally, we put together all these components in Section~\ref{sec:algorithm_summ} for our proposed framework \textbf{\underline{P}}roactive \underline{\textbf{A}}gent \underline{\textbf{INT}}erventions (PAINT), an overview of which is given in Figure~\ref{fig:framework}.

\begin{figure}
    \begin{minipage}{0.54\textwidth}
        \centering
        \includegraphics[width=\textwidth]{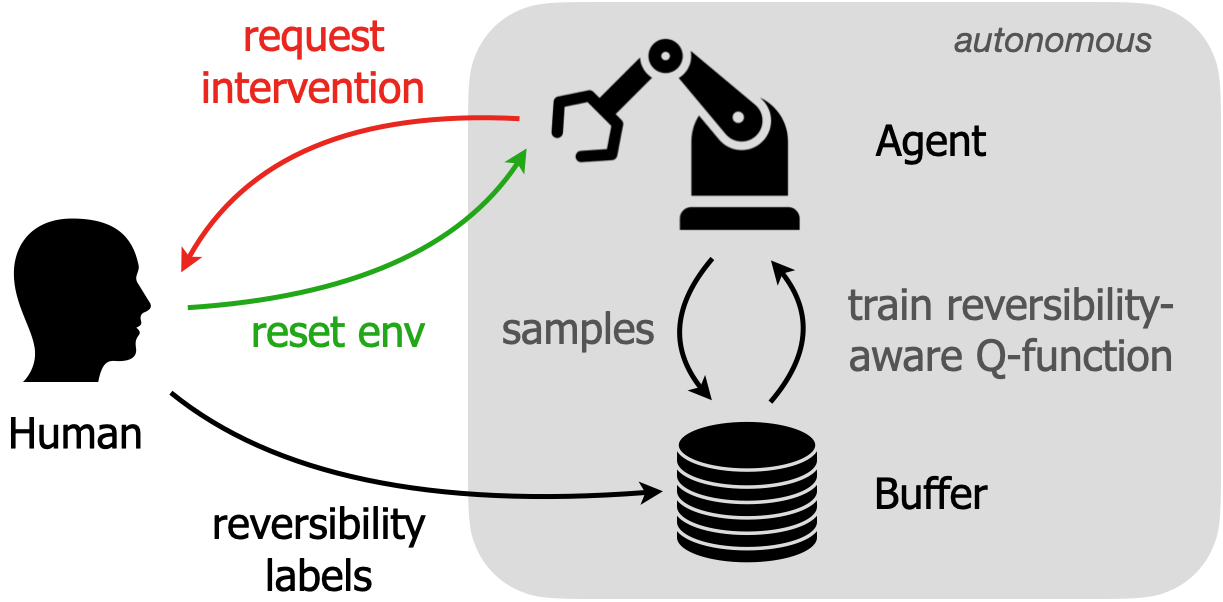}
        \caption{\small Overview of our framework PAINT for minimizing human monitoring and supervision when learning in the presence of irreversible states. The agent proactively requests interventions, freeing the human from active monitoring of training. When an intervention is requested, the human resets the environment and provides reversibility labels for the latest experience since the previous intervention.}
        \vspace{-0.2cm}
        \label{fig:framework}
    \end{minipage}
    \hspace{0.04\linewidth}
    \begin{minipage}{0.44\textwidth}
        \small
        \centering
        \begin{algorithm}[H]
            \label{alg:bin_search}
            \SetAlgoLined
            \textbf{input:} $\tau = \{s_i\}_{i=0}^T$; {\color{olive}// unlabeled trajectory} \\
            \While{$\textrm{len}(\tau) > 0$}{
                $m \leftarrow \lfloor\textrm{len}(\tau) / 2\rfloor$; {\color{olive}// get midpoint}\\
                {\color{olive}// query for midpoint}\\
                \If {$\mathcal{R}_\rho(s_m) = 1$}{
                    {\color{olive}// label first half reversible and query for second half}\\
                    label $\{s_i\}_{i=0}^m$ as 1\;
                    $\tau \leftarrow \{s_i\}_{i=m+1}^{\textrm{len}(\tau)}$\;
                }
                \Else{
                    {\color{olive}// label second half irreversible and query for first half}\\
                    label $\{s_i\}_{i=m+1}^{\textrm{len}(\tau)}$ as 0\;
                    $\tau \leftarrow \{s_i\}_{i=0}^{m}$\;
                }
            }
            \caption{Reversibility Labeling via Binary Search}
        \end{algorithm}
    \end{minipage}
\end{figure}

\subsection{Penalizing Visitation of Irreversible States}
\label{sec:framework}
Our goal is to penalize visitation of irreversible states by ensuring all actions leading to irreversible states are `worse' than those leading to reversible states. To this end, we adapt the reward-penalty framework from safe RL~\citep{thomas2021safe}
for learning in the presence of irreversible states. 
For a transition $(s, a, s')$, consider a surrogate reward function $\Tilde{r}$:
\begin{equation} \label{eq:surrogate_reward_function}
    \Tilde{r}(s, a) = \begin{cases}
        r(s, a), & \mathcal{R}_{\rho}(s') = 1\\
        R_{\textrm{min}} - \epsilon,  & \mathcal{R}_{\rho}(s') = 0
    \end{cases}
\end{equation}
Whenever the next state $s'$ is a reversible state, the agent gets the environment reward. Otherwise if it has entered an irreversible, it gets a constant reward $R_{\textrm{min}} - \epsilon$ that is worse than any reward given out by the environment. Whenever an agent enters an irreversible state, it will continue to remain in an irreversible state and get a constant reward of $R_{\textrm{min}} - \epsilon$. Therefore, the $Q$-value whenever $\mathcal{R}_{\rho}(s') = 0$ is given by:
\begin{align*}
    Q^\pi(s, a) = \mathbb{E}\left[\sum_{t=0}^\infty \gamma^t \Tilde{r}(s_t, a_t) \mid s_0 = s, a_0 =a \right] = (R_{\textrm{min}} - \epsilon)\sum_{t=0}^\infty \gamma^t = \frac{R_{\textrm{min}}-\epsilon}{1-\gamma}    
\end{align*}
This observation allows us to bypass the need to perform Bellman backups on irreversible states, and instead directly regress to the $Q$-value. More specifically, we can rewrite the loss function for the $Q$-value function as,
$
    \ell(Q) = \mathbb{E}_{(s, a, s', r) \sim \mathcal{D}}\left[Q(s, a) - \mathcal{B}^\pi Q(s,a)\right],
$
where the application of Bellman backup operator $\mathcal{B}^\pi Q(s, a)$ can be expanded as:
\begin{align}
    \mathcal{B}^\pi Q(s, a) &= \begin{cases}
        r(s, a) + \gamma \mathbb{E}_{a' \sim \pi(\cdot \mid s')}\hat{Q}(s', a'), & \mathcal{R}_\rho(s') = 1\\
        \left(R_{\textrm{min}} - \epsilon\right)/\left(1-\gamma\right), & \mathcal{R}_\rho(s') = 0
    \end{cases}\\
    &= \mathcal{R}_{\rho}(s')\left(r(s, a) + \gamma \mathbb{E}_{a' \sim \pi(\cdot \mid s')}\hat{Q}(s', a')\right) + \left(1-\mathcal{R}_\rho(s')\right) \frac{R_{\textrm{min}} - \epsilon}{1-\gamma}\label{eq:bellman_unsafe}
\end{align}
Here, $\mathcal{D}$ denotes the replay buffer, $\hat{Q}$ denotes the use of target networks commonly used in $Q$-learning algorithms to stabilize training when using neural networks as function approximators~\citep{mnih2015human}. This surrogate reward function and the modified Bellman operator can be used for any value-based RL algorithm in both episodic and autonomous RL settings. The hyperparameter $\epsilon$ controls how aggressively the agent is penalized for visiting irreversible states. In general, $Q$-values for actions leading to irreversible states will be lower than those keeping the agent amongst reversible states, encouraging the policies to visit irreversible states fewer times over the course of training. More details and proofs can be found in Appendix~\ref{app:q_val_proofs}.


\vspace{-0.5mm}
\subsection{Estimating Reversibility}
\vspace{-0.5mm}
\label{subsection:empirical_reversability}
\label{sec:label}
In general, $\mathcal{R}_\rho$ is not known apriori and will have to be estimated. We define ${\hat{\mathcal{R}}_\rho: \mathcal{S} \mapsto [0, 1]}$ as an estimator of the reversibility of a state $s\in\mathcal{S}$. We can then define an empirical Bellman backup operator $\hat{\mathcal{B}}^\pi$ from equation~\ref{eq:bellman_unsafe} by replacing $\mathcal{R}_\rho$ with the estimator $\hat{\mathcal{R}}_\rho$:
\begin{align*}
    \hat{\mathcal{B}}^\pi Q(s, a) = \mathbb{E}_{s'\sim \mathcal{P}(\cdot \mid s, a)}\left[\hat{\mathcal{R}}_{\rho}(s')\left(r(s, a) + \gamma \mathbb{E}_{a' \sim \pi(\cdot \mid s')}Q(s', a')\right) + \left(1-\hat{\mathcal{R}}_\rho(s')\right) \frac{R_{\textrm{min}} - \epsilon}{1-\gamma}\right]
\end{align*}
Analogous to $\hat{\mathcal{B}}^\pi$, we can define the empirical Bellman optimality operator $\hat{\mathcal{B}}^*$ (Eq~\ref{eq:bellman_optimal_unsafe}) for value iteration when using $\hat{\mathcal{R}}_\rho$. The following theorem bounds the suboptimality of the policy learned by value iteration under $\hat{\mathcal{B}}^*$:

\begin{theorem}\label{thm:final}
Let $\pi^*$ denote the optimal policy and $Q^*$ denote the corresponding optimal $Q$-value function. Let $\hat{\pi}^*$ denote the optimal policy returned by empirical Bellman optimality operator $\hat{\mathcal{B}}^*$. Assuming $\lVert \mathcal{R}_{\rho} - \hat{\mathcal{R}}_{\rho} \rVert_\infty \leq \delta$,
\begin{equation*}
    Q^{\hat{\pi}^*}(s, a) \geq Q^*(s, a) - \frac{2\delta \left(R_{\textrm{max}} - R_{\textrm{min}} + \epsilon\right)}{(1-\gamma)^2}
\end{equation*}

for all $(s, a) \in \mathcal{S} \times \mathcal{A}$.
\end{theorem}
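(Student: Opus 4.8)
The plan is to control the suboptimality $Q^* - Q^{\hat\pi^*}$ by routing it through the fixed point $\hat Q^*$ of the empirical optimality operator $\hat{\mathcal{B}}^*$, writing $Q^* - Q^{\hat\pi^*} = (Q^* - \hat Q^*) + (\hat Q^* - Q^{\hat\pi^*})$ and bounding each piece with a separate contraction argument. The engine behind both pieces is a single bound on how far the empirical operators can drift from the true ones, so I would establish that first.

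\textbf{Step 1 (operator discrepancy).} For any bounded $Q$ and any fixed policy $\pi$, subtracting the empirical backup from the true one cancels the reward and target terms except where $\mathcal{R}_\rho$ and $\hat{\mathcal{R}}_\rho$ disagree:
\[
(\mathcal{B}^\pi - \hat{\mathcal{B}}^\pi) Q(s,a) = \mathbb{E}_{s'}\!\left[\left(\mathcal{R}_\rho(s') - \hat{\mathcal{R}}_\rho(s')\right)\!\left(r(s,a) + \gamma\,\mathbb{E}_{a'}Q(s',a') - \tfrac{R_{\textrm{min}}-\epsilon}{1-\gamma}\right)\right],
\]
and the identical expression with $\max_{a'}$ in place of $\mathbb{E}_{a'}$ governs $\mathcal{B}^* $ versus $\hat{\mathcal{B}}^*$. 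I would then use $\lVert\mathcal{R}_\rho - \hat{\mathcal{R}}_\rho\rVert_\infty \le \delta$ together with the invariant that every value function I touch lies in $[\frac{R_{\textrm{min}}-\epsilon}{1-\gamma}, \frac{R_{\textrm{max}}}{1-\gamma}]$ (preserved by both operators since per-step rewards lie in $[R_{\textrm{min}}-\epsilon, R_{\textrm{max}}]$). This makes the bracketed factor nonnegative and at most $\frac{R_{\textrm{max}}-R_{\textrm{min}}+\epsilon}{1-\gamma}$, giving $\lVert(\mathcal{B}^* - \hat{\mathcal{B}}^*)Q\rVert_\infty \le \eta$ with $\eta := \frac{\delta (R_{\textrm{max}}-R_{\textrm{min}}+\epsilon)}{1-\gamma}$, and the same $\eta$ for any fixed-policy pair.

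\textbf{Steps 2--3 (two contractions).} Since $Q^* = \mathcal{B}^* Q^*$ and $\hat Q^* = \hat{\mathcal{B}}^* \hat Q^*$ are fixed points of $\gamma$-contractions, the triangle inequality gives $\lVert Q^* - \hat Q^*\rVert_\infty \le \lVert(\mathcal{B}^* - \hat{\mathcal{B}}^*)Q^*\rVert_\infty + \gamma\lVert Q^* - \hat Q^*\rVert_\infty$, hence $\lVert Q^* - \hat Q^*\rVert_\infty \le \eta/(1-\gamma)$. For the second piece I would exploit that $\hat\pi^*$ is greedy with respect to $\hat Q^*$, so $\hat Q^* = \hat{\mathcal{B}}^*\hat Q^* = \hat{\mathcal{B}}^{\hat\pi^*}\hat Q^*$; comparing this empirical policy-evaluation fixed point with the true one $Q^{\hat\pi^*} = \mathcal{B}^{\hat\pi^*} Q^{\hat\pi^*}$ and repeating the contraction argument yields $\lVert\hat Q^* - Q^{\hat\pi^*}\rVert_\infty \le \eta/(1-\gamma)$. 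Adding the two bounds gives $\lVert Q^* - Q^{\hat\pi^*}\rVert_\infty \le 2\eta/(1-\gamma) = \frac{2\delta(R_{\textrm{max}}-R_{\textrm{min}}+\epsilon)}{(1-\gamma)^2}$, whose lower-sided form is precisely the claimed inequality.

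The main obstacle is less the contractions themselves than the bookkeeping that keeps the final exponent at $(1-\gamma)^2$ rather than $(1-\gamma)^3$: the naive route of bounding $\lVert Q^* - \hat Q^*\rVert_\infty$ and then invoking a generic ``greedy-policy-from-approximate-$Q$'' performance-loss lemma would cost an extra factor of $1/(1-\gamma)$. The key move that avoids this is the Step 3 observation that $\hat\pi^*$ is greedy for $\hat Q^*$, which lets me treat $\hat Q^*$ as the fixed point of the \emph{evaluation} operator $\hat{\mathcal{B}}^{\hat\pi^*}$ and compare evaluation operators directly, so each of the two error sources contributes only a single factor of $1/(1-\gamma)$. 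I would also want to verify carefully that the boundedness invariant of Step 1 is genuinely preserved and that the constant target $\frac{R_{\textrm{min}}-\epsilon}{1-\gamma}$ is itself the lower end of the value range, since the tightness of the bracket bound rests on it.
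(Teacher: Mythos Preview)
Your proof is correct, and both your route and the paper's arrive at the same bound with the same exponent, but the decompositions differ. The paper first proves a policy-evaluation lemma, $\lVert Q^\pi - \hat Q^\pi\rVert_\infty \le \eta/(1-\gamma)$ for \emph{every} fixed $\pi$, and then splits
\[
Q^{\pi^*} - Q^{\hat\pi^*} = \bigl(Q^{\pi^*} - \hat Q^{\pi^*}\bigr) + \bigl(\hat Q^{\pi^*} - \hat Q^{\hat\pi^*}\bigr) + \bigl(\hat Q^{\hat\pi^*} - Q^{\hat\pi^*}\bigr),
\]
applying the lemma to the first and third terms and dropping the middle term by optimality of $\hat\pi^*$ in the empirical problem. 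You instead split through $\hat Q^*$ directly, bounding $\lVert Q^* - \hat Q^*\rVert_\infty$ via a fixed-point perturbation of the \emph{optimality} operators and then handling $\lVert \hat Q^* - Q^{\hat\pi^*}\rVert_\infty$ by observing $\hat Q^* = \hat Q^{\hat\pi^*}$ and comparing evaluation operators for $\hat\pi^*$. Your Step~3 is exactly the paper's third term; the substantive difference is that your Step~2 replaces the paper's ``lemma at $\pi^*$ plus nonpositive middle term'' by a single contraction argument on the optimality operators. The paper's version is a bit more modular (the general-$\pi$ lemma is reusable), while yours is slightly more streamlined and makes the invariant $Q\in[\tfrac{R_{\min}-\epsilon}{1-\gamma},\tfrac{R_{\max}}{1-\gamma}]$ explicit, which the paper's lemma proof uses implicitly when bounding $r + \gamma\mathbb{E}\hat Q \le R_{\max}/(1-\gamma)$.
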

\vspace{-2mm}
The proof and related discussion can be found in Appendix~\ref{app:emp_bell}. The result guarantees that closer $\hat{\mathcal{R}}_\rho$ is to $\mathcal{R}_\rho$ under the $\infty$-norm, the closer $\hat{\pi}^*$ is to $\pi^*$. To this end, we propose to learn $\hat{\mathcal{R}}_\rho$ by minimizing the binary cross-entropy loss ${\ell(\hat{\mathcal{R}}_\rho) = -\mathbb{E}_{s \sim \mathcal{D}}\big[\mathcal{R}_\rho(s) \log\hat{\mathcal{R}}_\rho(s) + (1-\mathcal{R}_\rho(s)) \log (1-\hat{\mathcal{R}}_\rho(s))\big]}$, where the states $s \sim \mathcal{D}$ represent the states visited by the agent.

Minimizing $\ell(\hat{\mathcal{R}}_\rho)$ requires the reversibility labels $\mathcal{R}_\rho(s)$ for $s \sim \mathcal{D}$. Since labeling requires supervision, it is critical to query $\mathcal{R}_\rho$ efficiently. Given a trajectory of states $\tau = (s_0, s_1, \ldots s_T)$, a na\"ive approach would be to query the labels $\mathcal{R}_\rho(s_i)$ for all states $s_i$, leading to $\mathcal{O}(T)$ queries per trajectory.
However, observe that we have the following properties: (a) all states \textit{following} an irreversible state will be irreversible and (b) all states \textit{preceding} a reversible state will be reversible. 
It follows from these properties that every trajectory can be split into a reversible segment ${\tau_r = (s_0, s_1, \ldots s_k)}$ and an irreversible segment ${\tau_{\sim r} = (s_{k+1}, \ldots s_T)}$, where the irreversible segment $\tau_{\sim r}$ can be empty potentially. Identifying $s_{k+1}$, the first irreversible state, generates the labels for the entire trajectory automatically. Fortunately, we can construct a scheme based on binary search to identify $s_{k+1}$ in $\mathcal{O}(\log T)$ queries: $s_{k+1}$ occurs after the midpoint of the trajectory if the midpoint is reversible, otherwise it occurs before it. The pseudocode for this routine is given in Alg~\ref{alg:bin_search}.

The total number of labels required would be $\mathcal{O}\left(N\log \left|\tau\right|_{\textrm{max}}\right)$, where $N$ is the number of trajectories in the replay buffer $\mathcal{D}$ and the $\left|\tau\right|_{\textrm{max}}$ denotes the maximum length of the trajectory. This represents a reduction in label requirement of $\mathcal{O}\left(N\left|\tau\right|_{\textrm{max}}\right)$ by prior safe RL methods. Furthermore, agent trains to avoid irreversible states, resulting in fewer and longer trajectories over the course of training. Thus, labeling reduces over time because the labels required is linear in $N$ and logarithmic in $\left|\tau\right|_{\textrm{max}}$.


\subsection{Proactive Interventions}

\label{sec:intervention}
Despite trying to avoid irreversible states via reward penalties, an agent will inevitably encounter some irreversible states due to exploratory behaviors. It is critical that the agent proactively asks for help in such situations, so that a human does not need to constantly monitor the training process. More specifically, an agent should request an intervention when it is an irreversible state. Since $\mathcal{R}_\rho(s)$ is not available, the agent again needs to estimate the reversibility of the state. It is natural to reuse the learned reversibility estimator $\hat{\mathcal{R}}_\rho$ for this purpose. We propose the following rule: the agent executes $a_{\texttt{reset}}$ whenever the reversibility classifier's prediction falls below 0.5, i.e., $\hat{\mathcal{R}}_\rho(s) < 0.5$.

\subsection{Putting it Together}

\label{sec:algorithm_summ}
\begin{wrapfigure}{r}{0.5\linewidth}
    \vspace{-12mm}
    \small
    \begin{algorithm}[H]
        \label{alg:autonomousPAINT}
        \SetAlgoLined
        \textbf{input:} $\mathbb{P}$; {\color{olive}// agent, params abstracted away} \\ 
        initialize $\hat{\mathcal{R}}_\rho, \mathcal{D}$; {\color{olive} // rev classifier, replay buffer} \\
        \While{not done}{
            $s \sim \rho_0$; {\color{olive}// reset environment} \\
            {\color{olive}// continue till classifier detects irreversibility} \\
            \While{$\hat{\mathcal{R}}_\rho(s) > 0.5$}{
                {\color{olive}// step in the environment} \\
                $a \sim \mathbb{P}(s)$, $s \sim \mathcal{P}(\cdot \mid s, a)$\;
                {\color{olive}// update replay buffer and agent}\\
                update $\mathcal{D}, \mathbb{P}$\;                
            }
            {\color{olive}// optionally explore environment}\\
            \For{explore steps}{
                $a \sim \textrm{unif}(\mathcal{A})$, $s \sim \mathcal{P}(\cdot \mid s, a)$\;
                update $\mathcal{D}$\;
            }
            {\color{olive}// reversibility labels via binary search}\\
            update reversibility labels in $\mathcal{D}$\;
            {\color{olive}// train classifier on all labeled data, new and old}\\
            train $\hat{\mathcal{R}}_\rho$\;
        }
        \caption{PAINT}
    \end{algorithm}
    \vspace{-10mm}
\end{wrapfigure}
With the key components in place, we summarize our proposed framework. High-level pseudocode is given in Alg.~\ref{alg:autonomousPAINT}, and a more detailed pseudocode is deferred to Appendix~\ref{app:detailedpseudocode}.

PAINT can modify any value-based RL algorithm, in both episodic and autonomous settings. This description and Alg.~\ref{alg:autonomousPAINT} focus on the latter setting, although adapting it to the episodic setting is straightforward. The agent's interaction with the environment consists of a sequence of trials that end whenever the environment is reset to a state $s \sim \rho_0$. During each trial, the agent operates autonomously, and the Bellman update for the critic is modified according to the empirical Bellman backup $\hat{\mathcal{B}}^\pi$. 
Whenever the reversibility classifier $\hat{\mathcal{R}}_\rho < 0.5$, parameterized as a neural network, the agent requests an intervention. The agent can execute a fixed number of exploration steps after requesting an intervention and before the intervention is performed. Whenever the classifier predicts an irreversible state correctly, these exploration steps can help the agent gather more information about irreversible states. At the time of the intervention, all new states visited since the previous intervention are labeled for reversibility via Algorithm~\ref{alg:bin_search}. Finally, the reversibility classifier is trained on all the labeled data before the environment is reset to a state $s \sim \rho_0$ for the next trial. Full implementation details can be found in Appendix~\ref{app:implementation}.

The agent is provided reversibility labels only when the external reset is provided. This simplifies supervision as the human can reset the environment and provide labels at the same time. This means the replay buffer $\mathcal{D}$ will contain states with and without reversibility labels, since states from the current trial will not yet have labels. We use Eq.~\ref{eq:bellman_unsafe} for states that have reversibility labels to avoid errors from the classifier affecting the critic update and use $\hat{\mathcal{B}}^\pi$ for those that do not have labels.

\section{Experiments}
We design several experiments to study the efficiency of our algorithm in terms of the required number of reset interventions and number of queried reversibility labels. 
Our code and videos of our results are at: \url{https://sites.google.com/view/proactive-interventions}.

\subsection{Experimental Setup}

\textbf{Environments.} 
To illustrate the wide applicability of our method, we design environments that represent three distinct RL setups: episodic, forward-backward, and continuing.
\begin{itemize}[leftmargin=*,topsep=0pt,noitemsep]
    \setlength{\itemsep}{0pt}
    \item \textbf{Maze} (episodic). A 2D continuous maze environment with trenches,
    which represent groups of connected irreversible states. The agent can fall into a trench, and once entered, it can roam freely within the trench but cannot leave it without an environment reset. In this task, resets are infrequently provided to the agent after $500$ time-steps.
    
    \item \textbf{Tabletop Organization}~\cite{sharma2021autonomous} (forward-backward). The agent must grasp the mug and put it down on one of the four goal positions. Dropping the mug outside of the red boundary is irreversible.
    
    \item \textbf{Peg Insertion}~\cite{sharma2021autonomous} (forward-backward). The agent must insert the peg into the goal but can potentially drop it off the table, which is irreversible.
    
    \item \textbf{Half-Cheetah Vel}~\cite{brockman2016openai} (continuing). The agent must run at the specified target velocity, which changes every $500$ steps, and can potentially flip over onto its back, which is irreversible.
\end{itemize}

\begin{figure}
    \centering
    \includegraphics[width=0.95\linewidth]{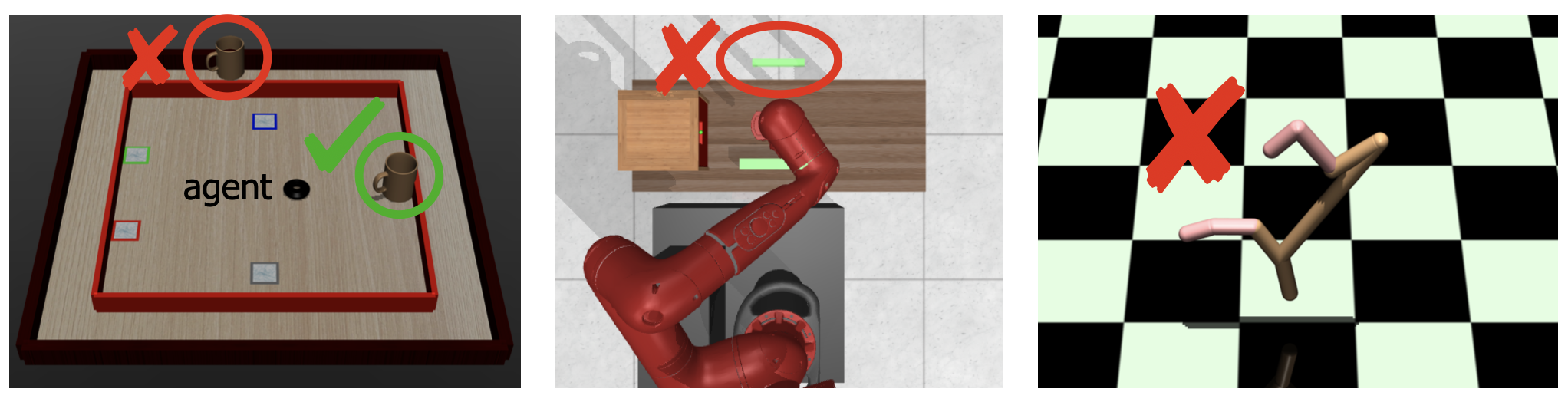}
    \caption{\small A subset of our evaluation tasks: Tabletop Manipulation, Peg Insertion, and Half-Cheetah Velocity. Irreversible states in the first two environments are when the agent drops the object outside the red boundary (\textit{left}) and off of the table (\textit{middle}). The cheetah is in an irreversible state whenever it is flipped over (\textit{right}).}
    \label{fig:envs}
\end{figure}

We visualize and fully describe each environment in Fig.~\ref{fig:envs} and in Appendix~\ref{app:envs} respectively.

\textbf{Comparisons.} 
In the episodic and continuing settings, we consider safe RL baselines that rely on reversibility labels at every time-step of training.
\begin{itemize}[leftmargin=*,topsep=0pt,noitemsep]
    \setlength{\itemsep}{0pt}
    \item \textbf{Safe Model-Based Policy Optimization (SMBPO)}~\cite{thomas2021safe}. This comparison implements the modified Bellman operator defined in Eqn.~\ref{eq:bellman_unsafe} in Section~\ref{sec:framework}, using the true reversibility labels.

    \item \textbf{Safety Q-functions for RL (SQRL)}~\cite{srinivasan2020learning}. A safe RL method that trains a safety critic, which estimates the future probability of entering an irreversible state for a safety-constrained policy.
\end{itemize}
In the forward-backward setting, we consider methods designed for the autonomous learning setup. These methods do not require any reversibility labels. Hence, our goal here is to compare the reset-efficiency of our method to prior work.
\begin{itemize}[leftmargin=*,topsep=0pt,noitemsep]
    \setlength{\itemsep}{0pt}
    \item \textbf{Leave No Trace (LNT)}~\cite{eysenbach2017leave}. An autonomous RL method that jointly trains a forward policy and reset policy. When the reset policy fails to return to the initial state, the agent requests a reset.
    
    \item \textbf{Matching Expert Distributions for Autonomous Learning (MEDAL)}~\cite{sharma2022state}. This method trains a reset policy that returns to the distribution of demonstration states provided for the forward policy. MEDAL does not have a built-in intervention rule.
\end{itemize}

In all tasks, we compare to a recently proposed reversibility-aware RL method, \textbf{Reversibility-Aware Exploration (RAE)}~\cite{grinsztajn2021there}, which does not require any reversibility labels. It instead trains a self-supervised reversibility estimator to predict whether a state transition $(s, \tilde{s})$ is more likely than the reverse $(\tilde{s}, s)$. We augment RAE with an intervention rule, similar to our method, defined in terms of predictions from its self-supervised classifier. In the forward-backward setting, we train both the forward and backward policies with RAE. Finally, we also evaluate \textbf{Episodic RL}, which represents the typical RL setup with frequent resets and thus provides an upper-bound on task success. In Appendix~\ref{app:implementation}, we provide full implementation details of each comparison, and in Appendix~\ref{app:comparisons}, we discuss the set of assumptions made by each comparison.

\subsection{Main Results}
\label{subsection:main_results}

\begin{figure*}
\begin{minipage}{0.5\linewidth}
    \centering
    \includegraphics[width=0.58\linewidth,valign=t]{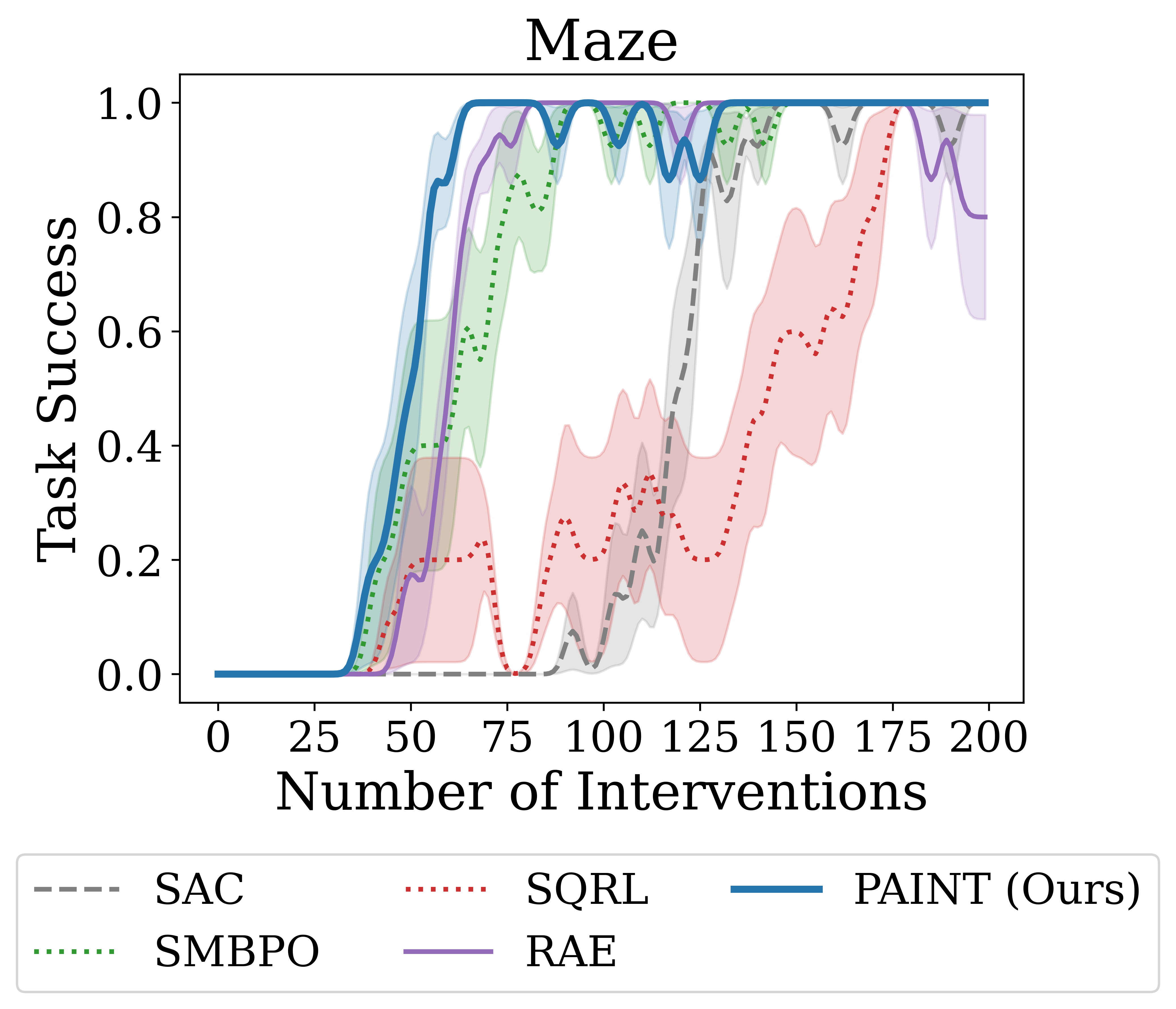} 
    \raisebox{-0.2cm}{\includegraphics[width=0.4\linewidth,valign=t]{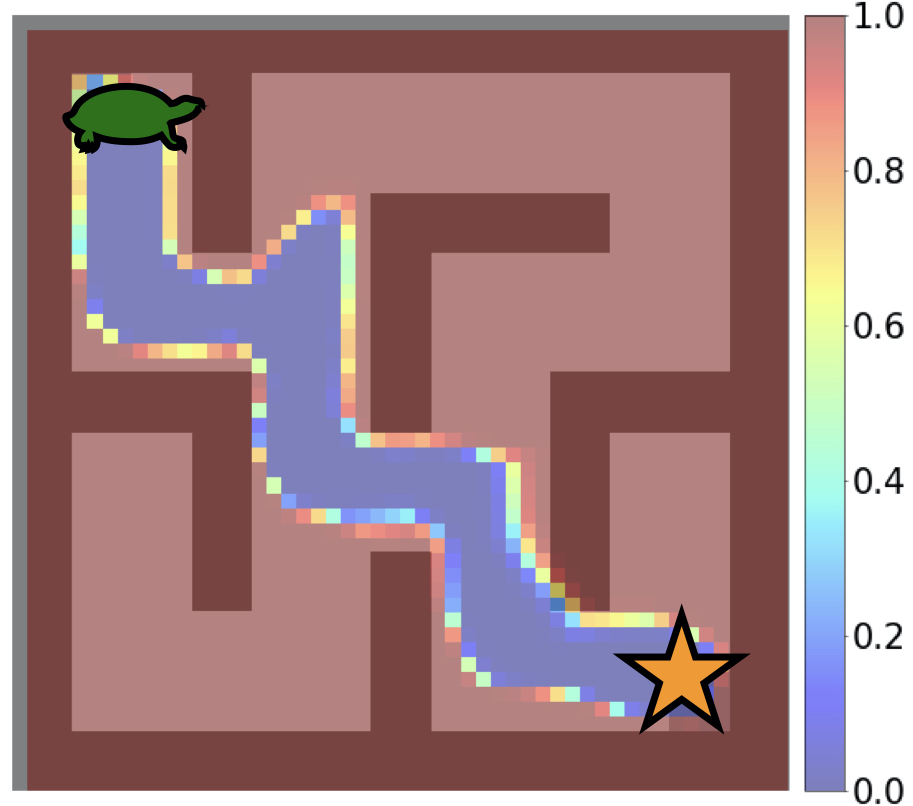}}
    \vspace{-0.1cm}
    \caption{\small (\textit{left}) Task success versus interventions. Shaded regions denote the standard error over $5$ seeds. (\textit{right}) Predictions generated by our reversibility classifier, where the purple region is predicted to be reversible.}
    \vspace{-0.6cm}
    \label{fig:maze_results}
\end{minipage}
\hspace{0.02\linewidth}
\begin{minipage}{0.48\linewidth}
    \vspace{0.6cm}
    \small
    \centering
    \setlength{\tabcolsep}{4pt}
    \begin{tabular}{llc}
        \toprule
        Task & Method & Labels \\
        \midrule
        \multirow{2}{*}{Maze} & SMBPO/SQRL & $200$K \\
                              & PAINT (Ours) & $3260 \pm 12$ \\
                              \cmidrule{1-3}
        \multirow{1}{*}{Tabletop} & PAINT (Ours) & $1021 \pm 69$ \\
                              \cmidrule{1-3}
        \multirow{1}{*}{Peg Insertion} & PAINT (Ours) & $2083 \pm 149$ \\
                              \cmidrule{1-3}
        \multirow{2}{*}{Cheetah} & SMBPO w Term. & $3$M \\
                                          & PAINT (Ours) & $8748 \pm 3762$ \\
        \bottomrule
    \end{tabular}
    \caption{\small Number of queried reversibility labels. For our method, we average the number of labels used across $5$ seeds and report the standard error.}
    \label{tab:queries}
\end{minipage}
\end{figure*}
In Fig.~\ref{fig:maze_results} (\textit{left}) and Fig.~\ref{fig:main_results}, we plot the task success versus the number of interventions in the 4 tasks. For methods that use reversibility labels, we report the total number of labels queried in Table~\ref{tab:queries}. 

\textbf{Maze.} While the safe RL methods, SMBPO and SQRL, require reversibility labels at every time-step, our approach PAINT only requires on average $3260$ queries to label all $200$K states visited. In Fig.~\ref{fig:maze_results} (right), we visualize predictions from our reversibility classifier at the end of training, where zero predicts `reversible' and one predicts `irreversible'. The classifier correctly identifies the path that leads to the goal as reversible. Interestingly, it classifies all other regions as stuck states, including the states that \emph{are} reversible. Because these states are irrelevant to the task, however, classifying them as irreversible, and therefore to be avoided, is advantageous to our policy as it reduces its area of exploration.

\textbf{More complex domains.} In the Tabletop Organization and Peg Insertion tasks, each agent is reset every $200$K and $100$K time-steps, per the EARL benchmark~\citep{sharma2021autonomous}. However, we allow agents to request earlier resets, and under this setting, we compare PAINT to other methods that implement intervention rules. Compared to Leave No Trace and Reversibility-Aware Exploration, PAINT requires significantly fewer resets---$80$ and $124$ resets respectively, which corresponds to roughly \textbf{one intervention for every $\mathbf{25}$K steps}. Importantly, the number of interventions plateaus as training progresses, and the agent requires fewer and fewer resets over time (see Appendix~\ref{app:add_exps} for additional plots of number of interventions versus time-steps).
The exception is MEDAL (green segment near the origin), which is not equipped with an early termination rule and so only uses $10$ interventions total. However, it also fails to make meaningful progress on the task with few resets.

On the continuing Half-Cheetah task, agents do not receive any resets, unless specifically requested. Here, we compare PAINT to SMPBO with early termination, an \textbf{oracle} version of our method, which assumes that reversibility labels are available at \emph{every} time-step and immediately requests an intervention if the agent is flipped over. PAINT converges to its final performance after around $750$ resets, on par with the number of resets required by SMPBO with early termination. On the other hand, a standard episodic RL agent, which receives resets at every $2$K steps, and RAE, which trains a self-supervised classifier, learn significantly slower with respect to the number of interventions.

\begin{figure*}
    \centering
    \includegraphics[width=\linewidth]{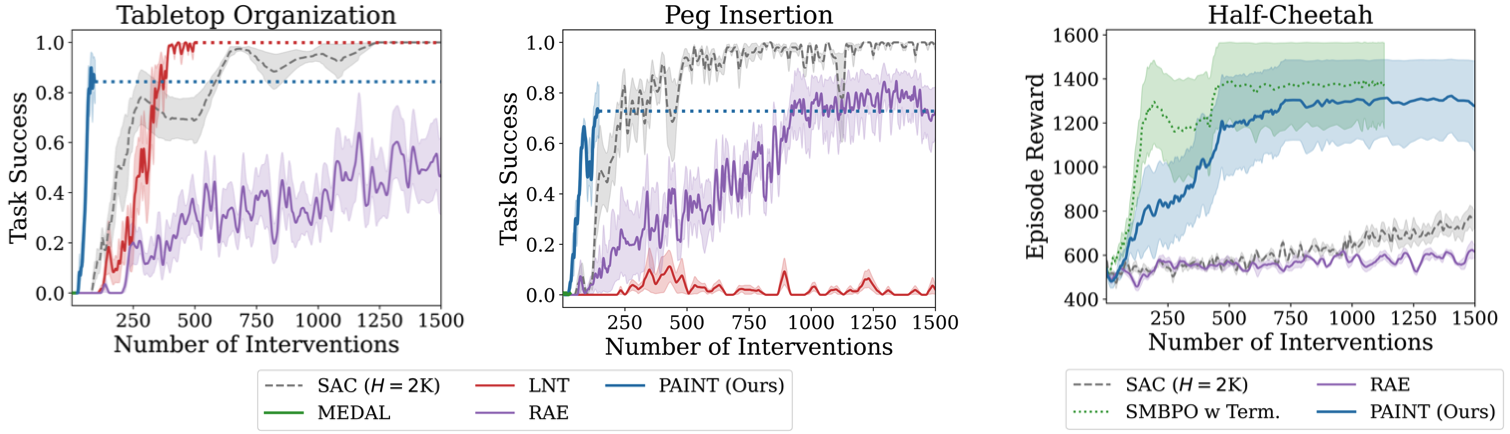}
    \caption{\small Task success versus interventions averaged over $5$ seeds. 
    Methods with stronger assumptions, i.e., SAC resets every $H$ steps and SMBPO requires labels at every time-step, are dotted.
    Note the short green segment near the origin representing MEDAL.
    }
    \label{fig:main_results}
\end{figure*}

\subsection{Ablations and Sensitivity Analysis}
\label{subsection:ablations}

\begin{figure*}
    \centering
    \includegraphics[width=0.66\linewidth]{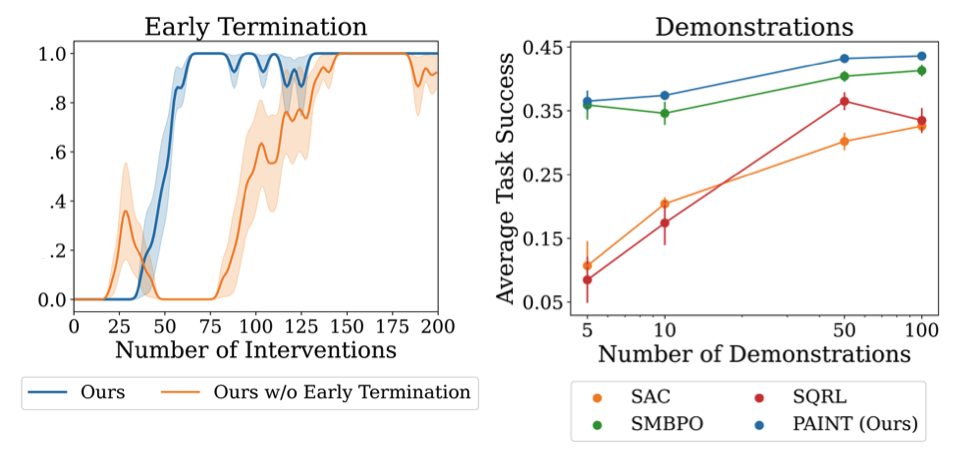}
    \caption{\small (\textit{left}) After removing the early termination condition, which initiates random exploration, we find that PAINT learns less efficiently.
    (\textit{right}) Varying the number of demonstrations suggests that PAINT and SMBPO are robust to the amount of available demonstrations.}
    \label{fig:ablations}
\end{figure*}

\textbf{Early termination}. In the episodic Maze setting, our algorithm switches to a uniform-random policy for the remainder of the episode if the termination condition is met. In Fig.~\ref{fig:ablations} (\textit{left}), we plot the performance without early termination, i.e., running the agent policy for the full episode. Taking random explorations, after the agent believes it has entered an irreversible state, significantly helps our method, as it increases the number and diversity of irreversible states the agent has seen.

\textbf{Varying the number of demonstrations}. Our method leverages demonstrations in a subset of environments. While we provide these demonstrations to all comparisons as well, we want to study how much our method relies on them. We plot the average task success during training versus number of demonstrations in Fig.~\ref{fig:ablations} (\textit{right}). While PAINT and SMBPO are robust to the amount, alternative methods tend to achieve significantly lower success when given fewer demonstrations. 



\section{Discussion}
\label{sec:conclusion}

In this work, we sought to build greater autonomy into RL agents, particularly in irreversible environments.
We proposed an algorithm, PAINT, that learns to detect and avoid irreversible states, and proactively requests an intervention when in an irreversible state. PAINT leverages reversibility labels to learn to identify irreversible states more quickly, and improves upon existing methods on a range of learning setups in terms of task success, reset-efficiency, and label-efficiency.

Despite these improvements, PAINT has multiple important limitations. In environments where irreversible states are not encountered until further into training, the reversibility classifier may produce false positives which would significantly delay the next intervention. Further, while PAINT is far more label-efficient than prior safe RL methods, it still requires around thousands of reversibility labels. We expect that this limitation may be mitigated with more sophisticated querying strategies, e.g. that take into account the classifier's confidence. Finally, we hope that future work can validate the ability for reversibility aware techniques to improve the autonomy of real robotic learning systems.

\begin{ack}
AX was supported by an NSF Graduate Research Fellowship. The work was also supported by funding from Google, Schmidt Futures, and ONR grants N00014-21-1-2685 and N00014-20-1-2675. The authors would also like to thank members of the IRIS Lab for helpful feedback on an early version of this paper.
\end{ack}

\bibliographystyle{plainnat}
\bibliography{references}

\section*{Checklist}

\begin{enumerate}

\item For all authors...
\begin{enumerate}
  \item Do the main claims made in the abstract and introduction accurately reflect the paper's contributions and scope?
    \answerYes{}
  \item Did you describe the limitations of your work?
    \answerYes{See Section~\ref{sec:conclusion}.}
  \item Did you discuss any potential negative societal impacts of your work?
    \answerYes{}
  \item Have you read the ethics review guidelines and ensured that your paper conforms to them?
    \answerYes{}
\end{enumerate}

\item If you are including theoretical results...
\begin{enumerate}
  \item Did you state the full set of assumptions of all theoretical results?
    \answerYes{} 
        \item Did you include complete proofs of all theoretical results?
    \answerYes{See Appendix~\ref{app:proofs}.}
\end{enumerate}

\item If you ran experiments...
\begin{enumerate}
  \item Did you include the code, data, and instructions needed to reproduce the main experimental results (either in the supplemental material or as a URL)?
    \answerYes{}
  \item Did you specify all the training details (e.g., data splits, hyperparameters, how they were chosen)?
    \answerYes{}
        \item Did you report error bars (e.g., with respect to the random seed after running experiments multiple times)?
    \answerYes{}
        \item Did you include the total amount of compute and the type of resources used (e.g., type of GPUs, internal cluster, or cloud provider)?
    \answerYes{}
\end{enumerate}

\item If you are using existing assets (e.g., code, data, models) or curating/releasing new assets...
\begin{enumerate}
  \item If your work uses existing assets, did you cite the creators?
    \answerYes{}
  \item Did you mention the license of the assets?
    \answerYes{}
  \item Did you include any new assets either in the supplemental material or as a URL?
    \answerYes{}
  \item Did you discuss whether and how consent was obtained from people whose data you're using/curating?
    \answerNA{}
  \item Did you discuss whether the data you are using/curating contains personally identifiable information or offensive content?
    \answerNA{}
\end{enumerate}

\item If you used crowdsourcing or conducted research with human subjects...
\begin{enumerate}
  \item Did you include the full text of instructions given to participants and screenshots, if applicable?
    \answerNA{}
  \item Did you describe any potential participant risks, with links to Institutional Review Board (IRB) approvals, if applicable?
    \answerNA{}
  \item Did you include the estimated hourly wage paid to participants and the total amount spent on participant compensation?
    \answerNA{}
\end{enumerate}

\end{enumerate}


\newpage
\appendix

\section{Appendix}

\subsection{Proofs}
\label{app:proofs}

\subsubsection{Penalizing Visitation of Irreversible States}
\label{app:q_val_proofs}
The high level goal of this section is to show that $Q$-values prefer the actions leading to reversible states when using the surrogate reward function (Eq~\ref{eq:surrogate_reward_function}). We first present the result for deterministic dynamics, and then give a more restricted proof for stochastic dynamics. For deterministic dynamics, define $\mathcal{S}_{\textrm{rev}} = \{(s, a) \mid \mathcal{R}_\rho(s') = 1\}$.

\begin{theorem} Let $(s, a) \in \mathcal{S}_{\textrm{rev}}$ denote a state-action leading to a reversible state and let ${(s\_, a\_) \not\in \mathcal{S}_{\textrm{rev}}}$ denote a state-action pair leading to an irreversible state. Then, for all such pairs
$$Q^\pi(s, a) > Q^\pi(s\_, a\_)$$
for all $\epsilon > 0$ and for all policies $\pi$.
\label{thm:q_value_larger_for_reversible_states}
\end{theorem}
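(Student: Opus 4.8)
The plan is to compute $Q^\pi(s\_,a\_)$ exactly, lower-bound $Q^\pi(s,a)$, and then show the gap between them is at least $\epsilon$. The whole argument reduces to two geometric-series facts plus the absorbing property of irreversible states already stated in the text (all states following an irreversible state are irreversible).

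First I would handle the irreversible side. Since $(s\_,a\_) \notin \mathcal{S}_{\textrm{rev}}$, the successor $s\_'$ is irreversible, and by the absorbing property every state thereafter is irreversible as well. Under deterministic dynamics this means every surrogate reward collected starting from $(s\_,a\_)$ equals $R_{\textrm{min}} - \epsilon$ (the immediate one because $\mathcal{R}_\rho(s\_')=0$, and all later ones for the same reason), so summing the geometric series gives the exact value $Q^\pi(s\_,a\_) = (R_{\textrm{min}}-\epsilon)/(1-\gamma)$, independent of $\pi$. This is precisely the closed form already derived in Section~\ref{sec:framework}.

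Next I would establish a universal floor on $Q$-values. Because $\tilde{r}(s,a) \geq R_{\textrm{min}} - \epsilon$ for every transition (it is either the environment reward $r(s,a) \geq R_{\textrm{min}}$ or exactly $R_{\textrm{min}} - \epsilon$), any value function obeys $Q^\pi \geq (R_{\textrm{min}}-\epsilon)/(1-\gamma)$. I then expand the reversible side one step: since $s'$ is reversible the immediate reward is $r(s,a) \geq R_{\textrm{min}}$, so $Q^\pi(s,a) = r(s,a) + \gamma\,\mathbb{E}_{a'\sim\pi}[Q^\pi(s',a')] \geq R_{\textrm{min}} + \gamma(R_{\textrm{min}}-\epsilon)/(1-\gamma)$, using the floor on the continuation value. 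Subtracting the two estimates yields
\[
Q^\pi(s,a) - Q^\pi(s\_,a\_) \;\geq\; R_{\textrm{min}} + \frac{\gamma(R_{\textrm{min}}-\epsilon)}{1-\gamma} - \frac{R_{\textrm{min}}-\epsilon}{1-\gamma} \;=\; R_{\textrm{min}} - (R_{\textrm{min}}-\epsilon) \;=\; \epsilon \;>\; 0,
\]
which gives the claim for every $\epsilon>0$ and every policy $\pi$.

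I do not expect a genuine obstacle in the deterministic case; the only real content is the absorbing property, which simultaneously pins the irreversible-side value to its exact closed form and justifies the universal lower bound. The place where care will be needed is the stochastic generalization promised immediately afterward: there the successor of $(s,a)$ is random and may place mass on both reversible and irreversible states, so the one-step expansion must be taken in expectation over $s'$ and the clean $\epsilon$-gap will only survive under an additional restriction (e.g.\ that reversible actions reach reversible successors with high enough probability), which is why the paper flags that proof as more restricted.
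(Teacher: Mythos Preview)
Your proposal is correct and mirrors the paper's own proof almost step for step: both compute $Q^\pi(s\_,a\_)=(R_{\min}-\epsilon)/(1-\gamma)$ from the absorbing property, establish the universal floor $Q^\pi\ge (R_{\min}-\epsilon)/(1-\gamma)$ from $\tilde r\ge R_{\min}-\epsilon$, expand the reversible side one step using $\tilde r(s,a)=r(s,a)\ge R_{\min}$, and conclude the gap is at least $\epsilon$. Your closing remark about why the stochastic extension needs an extra assumption is also accurate and aligned with the paper's subsequent theorem.
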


\begin{proof}
By definition, the reward function is bounded, i.e., $r(s', a') \in [R_{\textrm{min}}, R_{\textrm{max}}]$ for any ${(s', a') \in \mathcal{S} \times \mathcal{A}}$. For any ${(s', a') \in \mathcal{S}_{\textrm{rev}}}$, we have the following (using equation \ref{eq:surrogate_reward_function}):
$$\tilde{r}(s', a') = r(s', a') \geq R_{\textrm{min}} > R_{\textrm{min}} - \epsilon$$
and for any $(s', a') \notin \mathcal{S}_{\textrm{rev}}$, we have:
$$\tilde{r}(s', a') = R_{\textrm{min}} - \epsilon$$

This simplifies to $\tilde{r}(s', a') \geq R_{\textrm{min}} - \epsilon$ for any $(s', a') \in \mathcal{S} \times \mathcal{A}$. From the definition of $Q^\pi$, we have
\begin{equation}\label{eq:q_value_lower_bound}
    \begin{aligned}
    Q^\pi(s', a') = {} & \mathbb{E}\big[\sum_{t=0}^\infty \gamma^t \tilde{r}(s_t, a_t) \big| s_0 = s', a_0 = a'\big]
    \geq & \sum_{t = 0}^\infty \gamma^t (R_{\textrm{min}} - \epsilon)
    = & \frac{R_{\textrm{min}} - \epsilon}{1 - \gamma}
    \end{aligned}
\end{equation}
where the lower bound of $Q^\pi(s', a')$ is achieved if $(s', a') \notin \mathcal{S}_{\textrm{rev}}$. For $(s, a) \in \mathcal{S}_{\textrm{rev}}$,
\begin{align*}
    Q^\pi(s, a) &= \tilde{r}(s, a) + \gamma \mathbb{E}_{a' \sim \pi(\cdot \mid s') } \left[Q^\pi(s', a') \right]\\
    &\geq \tilde{r}(s, a) + \gamma \frac{R_{\textrm{min}} - \epsilon}{1 - \gamma}\\ 
    & \geq R_{\textrm{min}} + \gamma \frac{R_{\textrm{min}} - \epsilon}{1 - \gamma} \qquad \big[\tilde{r}(s, a) = r(s, a) \geq R_{\textrm{min}}, \text{ since } (s, a) \in \mathcal{S}_{\textrm{rev}} \big]  \\
    &= \epsilon + \left(R_{\textrm{min}} - \epsilon\right) + \gamma \frac{R_{\textrm{min}} - \epsilon}{1 - \gamma} = \epsilon + \frac{R_{\textrm{min}} - \epsilon}{1 - \gamma} = \epsilon + Q^\pi(s\_, a\_)
\end{align*}
where $(s\_, a\_) \not\in \mathcal{S}_{\textrm{rev}}$, implying $Q^\pi(s\_, a\_) = \frac{R_{\textrm{min}} - \epsilon}{1 - \gamma}$. This concludes the proof as ${Q^\pi(s, a) > Q^\pi(s\_, a\_)}$ for $\epsilon > 0$.
\end{proof}

To extend the discussion to stochastic dynamics, we redefine the reversibility set as ${\mathcal{S}_{\textrm{rev}} = \{(s, a) \mid \mathbb{P}\left(\mathcal{R}_\rho(s') = 1\right) \geq \eta_1\}}$, i.e, state-action pairs leading to a reversible state with at least $\eta_1$ probability and let ${\mathcal{S}_{\textrm{irrev}} = \{(s, a) \mid \mathbb{P}\left(\mathcal{R}_\rho(s') = 1\right) \leq \eta_2\}}$ denote the set of state-action pairs leading to irreversible states with at most $\eta_2$ probability. The goal is to show that actions leading to reversible states with high probability will have higher $Q$-values than those leading to irreversible states with high probability. The surrogate reward function $\tilde{r}$ is defined as:
\begin{equation} \label{eq:stoch_surrogate_reward_function}
    \Tilde{r}(s, a, s') = \begin{cases}
        r(s, a, s'), & \quad\mathcal{R}_\rho(s') = 1\\
        R_{\textrm{min}} - \epsilon,  & \quad\mathcal{R}_{\rho}(s') = 0
    \end{cases}
\end{equation}

\begin{theorem} \textit{Let $(s, a) \in \mathcal{S}_{\textrm{rev}}$ denote a state-action leading to a reversible state with at least $\eta_1$ probability and let ${(s\_, a\_) \in \mathcal{S}_{\textrm{irrev}}}$ denote a state-action pair leading to an irreversible state with at least $(1-\eta_2)$ probability. Assuming $\eta_1 > \eta_2 / (1-\gamma)$,
$$Q^\pi(s, a) > Q^\pi(s\_, a\_)$$
for all $\epsilon > \frac{\eta_2}{\eta_1 - \gamma \eta_1 -\eta_2} \left(R_{\textrm{max}} - R_{\textrm{min}}\right)$ and for all policies $\pi$.}
\end{theorem}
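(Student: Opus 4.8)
The plan is to mirror the deterministic argument but to control the stochastic transitions through a conditional decomposition over the reversibility of the successor state, establishing a lower bound on $Q^\pi(s,a)$ for reversible state-action pairs and an upper bound on $Q^\pi(s\_,a\_)$ for irreversible ones, and then comparing the two. First I would record the two extreme values induced by the surrogate reward: since $\tilde{r} \in [R_{\textrm{min}} - \epsilon, R_{\textrm{max}}]$, every $Q^\pi$ lies in $[L, U]$ with $L = (R_{\textrm{min}} - \epsilon)/(1-\gamma)$ and $U = R_{\textrm{max}}/(1-\gamma)$. The structural fact inherited from the deterministic proof, which relies on property (a) that every successor of an irreversible state is irreversible, is that conditioned on $\mathcal{R}_\rho(s') = 0$ the agent stays irreversible forever, so the one-step value collapses deterministically to $\tilde{r}(s,a,s') + \gamma Q^\pi(s',a') = (R_{\textrm{min}} - \epsilon) + \gamma L = L$.

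Next I would expand $Q^\pi(s,a) = \mathbb{E}_{s'}[\tilde{r} + \gamma \mathbb{E}_{a'}Q^\pi(s',a')]$ by conditioning on the reversibility of $s'$, writing $p = \mathbb{P}(\mathcal{R}_\rho(s')=1 \mid s,a)$. The irreversible branch contributes exactly $L$ by the observation above, while on the reversible branch I bound $\tilde{r} = r \geq R_{\textrm{min}}$ and $Q^\pi(s',a') \geq L$, so that branch is at least $R_{\textrm{min}} + \gamma L$. The crucial algebraic identity is $R_{\textrm{min}} + \gamma L = L + \epsilon$, which converts the bound into $Q^\pi(s,a) \geq p(L+\epsilon) + (1-p)L = L + p\epsilon \geq L + \eta_1\epsilon$, using $p \geq \eta_1$ for $(s,a) \in \mathcal{S}_{\textrm{rev}}$.

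Symmetrically, for $(s\_,a\_) \in \mathcal{S}_{\textrm{irrev}}$ I would upper bound the reversible branch by $R_{\textrm{max}} + \gamma U = U$ and keep the irreversible branch at $L$. Since $U \geq L$, the resulting expression $pU + (1-p)L$ is increasing in $p$, so the bound $p \leq \eta_2$ gives $Q^\pi(s\_,a\_) \leq \eta_2 U + (1-\eta_2)L = L + \eta_2(U-L) = L + \eta_2 (R_{\textrm{max}} - R_{\textrm{min}} + \epsilon)/(1-\gamma)$.

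Finally I would combine the two estimates: the desired inequality $Q^\pi(s,a) > Q^\pi(s\_,a\_)$ is implied by $\eta_1\epsilon > \eta_2 (R_{\textrm{max}} - R_{\textrm{min}} + \epsilon)/(1-\gamma)$, which rearranges to $\epsilon\,(\eta_1 - \gamma\eta_1 - \eta_2) > \eta_2 (R_{\textrm{max}} - R_{\textrm{min}})$. Here the assumption $\eta_1 > \eta_2/(1-\gamma)$ is precisely what makes the coefficient $\eta_1 - \gamma\eta_1 - \eta_2$ strictly positive, so dividing through yields the stated threshold on $\epsilon$. The main obstacle is not any single estimate but making the conditional decomposition fully rigorous under stochastic dynamics: I must justify that the irreversible continuation collapses to the constant $L$ (a consequence of property (a)), and confirm that the monotonicity of each branch sum in $p$ lets me substitute the extreme value $\eta_1$ in the lower bound and $\eta_2$ in the upper bound, each in the direction that preserves the inequality.
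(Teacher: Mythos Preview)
Your proposal is correct and follows essentially the same route as the paper: condition on the reversibility of $s'$, lower-bound the reversible branch by $R_{\min}+\gamma L$ and the irreversible branch by $L$ to get $Q^\pi(s,a)\ge L+\eta_1\epsilon$, upper-bound the reversible branch by $U$ for $(s\_,a\_)$ to get $Q^\pi(s\_,a\_)\le L+\eta_2(U-L)$, and then compare. Your use of the identities $R_{\min}+\gamma L=L+\epsilon$ and $R_{\max}+\gamma U=U$ just packages the same arithmetic the paper carries out explicitly, and the final rearrangement is identical.
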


\begin{proof}
For any ${(s, a) \in \mathcal{S}_{\textrm{rev}}}$, we have the following (using equation \ref{eq:stoch_surrogate_reward_function}):
\begin{align}
    Q^\pi(s, a) &= \mathbb{P}\left(\mathcal{R}_\rho(s') = 1\right) \Big(r(s, a) + \gamma\mathbb{E}_{a' \sim \pi(\cdot \mid s')}\left[ Q^\pi(s', a')\right]\Big) +\mathbb{P}\left(\mathcal{R}_\rho(s') = 0\right)\frac{R_{\textrm{min}} - \epsilon}{1 - \gamma}\nonumber\\
    &\geq \mathbb{P}\left(\mathcal{R}_\rho(s') = 1\right) \Big(R_{\textrm{min}} + \gamma\frac{R_{\textrm{min}} - \epsilon}{1 - \gamma}\Big) +\mathbb{P}\left(\mathcal{R}_\rho(s') = 0\right)\frac{R_{\textrm{min}} - \epsilon}{1 - \gamma}\nonumber\\
    &\geq \eta_1 \Big(R_{\textrm{min}} + \gamma\frac{R_{\textrm{min}} - \epsilon}{1 - \gamma}\Big) + \left(1 - \eta_1 \right)\frac{R_{\textrm{min}} - \epsilon}{1 - \gamma}\nonumber\\
    &= \eta_1 R_{\textrm{min}} + (1 + \gamma \eta_1 - \eta_1) \frac{R_{\textrm{min}}-\epsilon}{1-\gamma}
\end{align}
where $\mathbb{P}(\mathcal{R}_\rho(s') = 1) \geq \eta_1$. 
For any $(s, a) \in \mathcal{S}_{\textrm{irrev}}$,
\begin{align}
    Q^\pi(s, a) &= \mathbb{P}\left(\mathcal{R}_\rho(s') = 1\right) \Big(r(s, a) + \gamma\mathbb{E}_{a' \sim \pi(\cdot \mid s')}\left[ Q^\pi(s', a')\right]\Big) +\mathbb{P}\left(\mathcal{R}_\rho(s') = 0\right)\frac{R_{\textrm{min}} - \epsilon}{1 - \gamma}\nonumber\\
    &\leq \mathbb{P}\left(\mathcal{R}_\rho(s') = 1\right) \frac{R_{\textrm{max}}}{1-\gamma} +\mathbb{P}\left(\mathcal{R}_\rho(s') = 0\right)\frac{R_{\textrm{min}} - \epsilon}{1 - \gamma}\nonumber\\
    &\leq \eta_2 \frac{R_{\textrm{max}}}{1-\gamma} + (1-\eta_2)\frac{R_{\textrm{min}} - \epsilon}{1 - \gamma}
\end{align}
as $\mathbb{P}(\mathcal{R}_\rho(s') = 1) \leq \eta_2)$ by definition of $\mathcal{S}_{\textrm{irrev}}$. Under the assumption that $\eta_1 > \eta_2 / (1-\gamma)$, whenever
\begin{align*}
    \eta_1 R_{\textrm{min}} + (1 + \gamma \eta_1 - \eta_1) \frac{R_{\textrm{min}}-\epsilon}{1-\gamma} &> \eta_2 \frac{R_{\textrm{max}}}{1-\gamma} + (1-\eta_2)\frac{R_{\textrm{min}} - \epsilon}{1 - \gamma}\\
    \eta_1 R_{\textrm{min}} - \eta_2 \frac{R_{\textrm{max}}}{1-\gamma} &> (\eta_1 - \gamma \eta_1 - \eta_2) \frac{R_{\textrm{min}}-\epsilon}{1-\gamma} \\
    \eta_1 (1-\gamma) R_{\textrm{min}} - \eta_2 R_{\textrm{max}} &> (\eta_1 - \gamma \eta_1 - \eta_2) (R_{\textrm{min}}-\epsilon)\\
    \epsilon &> \frac{\eta_2}{(\eta_1 - \gamma\eta_1 -\eta_2)} \left(R_{\textrm{max}} - R_{\textrm{min}}\right),
\end{align*}

ensures that $Q^\pi(s, a) > Q^\pi(s\_, a\_)$ for all $(s, a) \in \mathcal{S}_{\textrm{rev}}$ and for all $(s\_, a\_) \in \mathcal{S}_{\textrm{irrev}}$, finishing the proof.
\end{proof}

The above proof guarantees that for actions that lead to reversible states with high probability will have higher $Q$-values than actions leading to irreversible states with high probabilities for \textit{all policies}, even under stochastic dynamics. The restrictive assumption of $\eta_1 > \eta_2 / (1-\gamma)$ is required to ensure that the $Q$-value for worst $(s, a) \in \mathcal{S}_{\textrm{rev}}$ is better than the best $(s, a) \in \mathcal{S}_{\textrm{irrev}}$. An alternate analysis can be found in \citep[Appendix A.2]{thomas2021safe}, where the guarantees are only given for the optimal $Q$-functions but under less stringent assumptions. Improved guarantees are deferred to future work.

\subsubsection{On Empirical Bellman Backup Operator}
\label{app:emp_bell}

The empirical Bellman backup operator was introduced in subsection \ref{subsection:empirical_reversability}. In this section, we prove that it is a contraction, and analyze the convergence under empirical Bellman backup. For ${\hat{\mathcal{R}}_\rho: \mathcal{S} \mapsto [0, 1]}$, the empirical Bellman backup operator can be written as:

\begin{align}
    \hat{\mathcal{B}}^\pi Q(s, a) = \mathbb{E}_{s'\sim \mathcal{P}(\cdot \mid s, a)}\left[\hat{\mathcal{R}}_{\rho}(s')\left(r(s, a) + \gamma \mathbb{E}_{a' \sim \pi(\cdot \mid s')}Q(s', a')\right) + \left(1-\hat{\mathcal{R}}_\rho(s')\right) \frac{R_{\textrm{min}} - \epsilon}{1-\gamma}\right]\label{eq:bellman_empricial_unsafe}
\end{align}

\begin{theorem} \label{thm:empirical_bellman_contraction}
Empirical Bellman backup operator in equation~\ref{eq:bellman_empricial_unsafe} is a contraction under the $L^\infty$ norm.
\end{theorem}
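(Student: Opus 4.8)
The plan is to verify the standard $\gamma$-contraction inequality $\lVert \hat{\mathcal{B}}^\pi Q_1 - \hat{\mathcal{B}}^\pi Q_2 \rVert_\infty \leq \gamma \lVert Q_1 - Q_2 \rVert_\infty$ directly from the definition in Eq.~\ref{eq:bellman_empricial_unsafe}, and then invoke $\gamma \in [0,1)$ to conclude that the operator is a contraction. First I would fix two arbitrary action-value functions $Q_1, Q_2$ and an arbitrary state-action pair $(s,a)$, and subtract the two backups $\hat{\mathcal{B}}^\pi Q_1(s,a) - \hat{\mathcal{B}}^\pi Q_2(s,a)$.

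The key structural step is to notice that the two summands in Eq.~\ref{eq:bellman_empricial_unsafe} which do not involve the bootstrapped value — namely the immediate-reward term $\hat{\mathcal{R}}_\rho(s')\, r(s,a)$ and the irreversibility-penalty term $(1-\hat{\mathcal{R}}_\rho(s'))\,(R_{\textrm{min}}-\epsilon)/(1-\gamma)$ — are independent of $Q$ and therefore cancel exactly in the difference. What survives is $\gamma\, \mathbb{E}_{s'\sim\mathcal{P}(\cdot\mid s,a)}\big[\hat{\mathcal{R}}_\rho(s')\, \mathbb{E}_{a'\sim\pi(\cdot\mid s')}(Q_1(s',a') - Q_2(s',a'))\big]$. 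I would then take absolute values and push them inside the expectations via the triangle inequality (equivalently Jensen), bounding each pointwise gap $\lvert Q_1(s',a') - Q_2(s',a')\rvert$ by $\lVert Q_1 - Q_2 \rVert_\infty$. The crucial fact at this stage is that $\hat{\mathcal{R}}_\rho(s') \in [0,1]$, so the weight multiplying the bootstrapped term is at most $1$; this lets me drop both $\hat{\mathcal{R}}_\rho(s')$ and the two expectations to obtain the uniform bound $\gamma \lVert Q_1 - Q_2 \rVert_\infty$. Taking the supremum over $(s,a)$ yields the contraction inequality, and $\gamma < 1$ gives a modulus strictly below one.

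There is no substantial obstacle here: the argument is the textbook contraction proof adapted to the reweighted operator. The only point deserving a moment of care is confirming that $\hat{\mathcal{R}}_\rho$ being a general $[0,1]$-valued estimator — rather than a $\{0,1\}$ indicator or a normalized probability — does not spoil the bound. It does not, precisely because $0 \leq \hat{\mathcal{R}}_\rho(s') \leq 1$ means that the extra factor can only shrink the discrepancy relative to the vanilla Bellman operator. I would state explicitly where this range assumption on $\hat{\mathcal{R}}_\rho$ enters, so the reader sees that it is exactly what keeps the effective discount at $\gamma$ and no larger.
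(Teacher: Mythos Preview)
Your proposal is correct and follows essentially the same route as the paper's proof: cancel the $Q$-independent terms, isolate the $\gamma\,\hat{\mathcal{R}}_\rho(s')$-weighted bootstrapped difference, and bound by $\gamma\lVert Q_1 - Q_2\rVert_\infty$ using $\hat{\mathcal{R}}_\rho(s')\in[0,1]$. Your ordering---pushing the absolute value inside the expectations \emph{before} invoking $\hat{\mathcal{R}}_\rho(s')\leq 1$---is in fact the more careful way to carry out that step.
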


\begin{proof}
For all $(s, a) \in \mathcal{S} \times \mathcal{A}$, we have the following:
\begin{equation*}
    \begin{aligned}
    \left|\hat{\mathcal{B}}^\pi Q(s, a) - \hat{\mathcal{B}}^\pi Q'(s, a)\right| &= \gamma \left| \mathbb{E}_{s'\sim \mathcal{P}(\cdot \mid s, a)} \left[ \hat{\mathcal{R}}_\rho(s')\mathbb{E}_{a' \sim \pi(\cdot \mid s')}\left[Q(s', a') - Q'(s', a')\right]\right] \right| \\
    &\leq  \gamma \left| \mathbb{E}_{s'\sim \mathcal{P}(\cdot \mid s, a)} \left[ \mathbb{E}_{a' \sim \pi(\cdot \mid s')}\left[Q(s', a') - Q'(s', a')\right]\right] \right| 
    \\ & \hspace{1.5cm} \left(\text{since } \hat{\mathcal{R}}_\rho(s') \in [0, 1]\right)\\
    &\leq \gamma \max_{(s'', a'') \in \mathcal{S} \times \mathcal{A}} \left|Q(s'', a'') - Q'(s'', a'') \right| \\
    &= \gamma \lVert Q - Q'\rVert_\infty
    \end{aligned}
\end{equation*}

Since this holds for all $(s, a) \in \mathcal{S} \times \mathcal{A}$, this implies:
$$||\hat{\mathcal{B}}^\pi Q - \hat{\mathcal{B}}^\pi Q'||_\infty = \max_{(s, a) \in \mathcal{S} \times \mathcal{A}} |\hat{\mathcal{B}}^\pi Q(s, a) - \hat{\mathcal{B}}^\pi Q'(s, a)| \leq \gamma ||Q - Q'||_\infty$$

The discount factor $\gamma < 1$, proving our claim.
\end{proof}

For a policy $\pi$, let $Q^\pi$ be the true $Q$-value function computed using conventional Bellman backup $\mathcal{B}^\pi$ and $\hat{Q}^\pi$ be the $Q$-values computed using the empirical Bellman backup $\hat{\mathcal{B}}^\pi$. Being fixed point of the operators, we have $Q^\pi = \mathcal{B}^\pi Q^\pi$ and $\hat{Q}^\pi = \hat{\mathcal{B}}^\pi \hat{Q}^\pi$. The following theorem relates the two:
\begin{lemma} \label{thm:suboptimality_bound}
Assuming that $\lVert \mathcal{R}_{\rho} - \hat{\mathcal{R}}_{\rho} \rVert_\infty \leq \delta$, the difference between true $Q$-values and empirical $Q$-values for any policy $\pi$ obeys the following inequality:
\begin{equation*}
    \left|Q^\pi(s, a) - \hat{Q}^\pi(s, a)\right| \leq \frac{\delta \left(R_{\textrm{max}} - R_{\textrm{min}} + \epsilon\right)}{(1-\gamma)^2}
\end{equation*}
\end{lemma}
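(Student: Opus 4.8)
The plan is to exploit the fixed-point characterizations $Q^\pi = \mathcal{B}^\pi Q^\pi$ and $\hat{Q}^\pi = \hat{\mathcal{B}}^\pi \hat{Q}^\pi$ together with the contraction property already established in Theorem~\ref{thm:empirical_bellman_contraction}. First I would write $Q^\pi - \hat{Q}^\pi = \mathcal{B}^\pi Q^\pi - \hat{\mathcal{B}}^\pi \hat{Q}^\pi$ and insert the pivot term $\hat{\mathcal{B}}^\pi Q^\pi$, splitting the difference as $(\mathcal{B}^\pi Q^\pi - \hat{\mathcal{B}}^\pi Q^\pi) + (\hat{\mathcal{B}}^\pi Q^\pi - \hat{\mathcal{B}}^\pi \hat{Q}^\pi)$. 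The second term is the empirical operator applied to two different $Q$-functions, so by Theorem~\ref{thm:empirical_bellman_contraction} its $\infty$-norm is at most $\gamma \lVert Q^\pi - \hat{Q}^\pi \rVert_\infty$. The whole argument then reduces to controlling the first term, where the two operators act on the same $Q^\pi$ and differ only through $\mathcal{R}_\rho$ versus $\hat{\mathcal{R}}_\rho$.

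For that first term, subtracting the two operator definitions lets the reward and the constant branch factor out cleanly, yielding $\mathbb{E}_{s'}\big[(\mathcal{R}_\rho(s') - \hat{\mathcal{R}}_\rho(s'))\big(r(s,a) + \gamma \mathbb{E}_{a'} Q^\pi(s',a') - \tfrac{R_{\textrm{min}}-\epsilon}{1-\gamma}\big)\big]$. The factor $\lvert \mathcal{R}_\rho(s') - \hat{\mathcal{R}}_\rho(s') \rvert$ is at most $\delta$ by assumption, so the crux is bounding the bracketed quantity. Using that the surrogate rewards lie in $[R_{\textrm{min}}-\epsilon,\, R_{\textrm{max}}]$, the $Q$-values obey $\tfrac{R_{\textrm{min}}-\epsilon}{1-\gamma} \leq Q^\pi \leq \tfrac{R_{\textrm{max}}}{1-\gamma}$; substituting these extremes shows the bracketed term ranges between $\epsilon$ and $\tfrac{R_{\textrm{max}}-R_{\textrm{min}}+\epsilon}{1-\gamma}$, so its absolute value is at most $\tfrac{R_{\textrm{max}}-R_{\textrm{min}}+\epsilon}{1-\gamma}$. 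Combining, the first term is bounded by $\delta \cdot \tfrac{R_{\textrm{max}}-R_{\textrm{min}}+\epsilon}{1-\gamma}$.

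Finally I would assemble the triangle inequality $\lVert Q^\pi - \hat{Q}^\pi \rVert_\infty \leq \delta\, \tfrac{R_{\textrm{max}}-R_{\textrm{min}}+\epsilon}{1-\gamma} + \gamma \lVert Q^\pi - \hat{Q}^\pi \rVert_\infty$ and solve for the norm, dividing through by $(1-\gamma)$ to obtain the claimed bound $\tfrac{\delta(R_{\textrm{max}}-R_{\textrm{min}}+\epsilon)}{(1-\gamma)^2}$, which holds uniformly over $(s,a)$ and hence pointwise. The main obstacle — and the only genuinely nontrivial step — is the bound on the bracketed difference $r(s,a) + \gamma \mathbb{E}_{a'}Q^\pi(s',a') - \tfrac{R_{\textrm{min}}-\epsilon}{1-\gamma}$: getting the tight constant $\tfrac{R_{\textrm{max}}-R_{\textrm{min}}+\epsilon}{1-\gamma}$ rather than a looser value requires using both the upper and lower bounds on $Q^\pi$ and verifying that the upper extreme dominates the absolute value. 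Everything else is the standard pivot-and-contraction template for perturbed Bellman operators.
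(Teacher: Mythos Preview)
Your proposal is correct and lands on exactly the same recursive inequality and final bound as the paper, but the decomposition is organized slightly differently. You insert the pivot $\hat{\mathcal{B}}^\pi Q^\pi$ and invoke Theorem~\ref{thm:empirical_bellman_contraction} for the contraction step, so your bracketed perturbation term carries $Q^\pi$; the paper instead expands $\mathcal{B}^\pi Q^\pi - \hat{\mathcal{B}}^\pi \hat{Q}^\pi$ directly and uses the algebraic identity $\mathcal{R}_\rho Q^\pi - \hat{\mathcal{R}}_\rho \hat{Q}^\pi = \mathcal{R}_\rho(Q^\pi - \hat{Q}^\pi) + (\mathcal{R}_\rho - \hat{\mathcal{R}}_\rho)\hat{Q}^\pi$, which puts $\hat{Q}^\pi$ in the bracket and leaves a factor $\mathcal{R}_\rho(s')$ (rather than the implicit $\hat{\mathcal{R}}_\rho(s')$ from the contraction) on the recursive term. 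Both factors are at most $1$, and both bracketed quantities obey the same range bound $[\,\epsilon,\ (R_{\textrm{max}}-R_{\textrm{min}}+\epsilon)/(1-\gamma)\,]$, so the two routes are equivalent. Your version is a bit more modular since it reuses the contraction theorem and passes to the $\infty$-norm immediately, whereas the paper unrolls the pointwise recursion; neither buys anything extra in the final constant.
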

\begin{proof}

Since $Q^\pi$ is the fixed point of $\mathcal{B}^\pi$ and $\hat{Q}^\pi$ is the fixed point of $\hat{\mathcal{B}}^\pi$, we can write:
\begin{align}
    \left|Q^\pi(s, a) - \hat{Q}^\pi(s, a)\right| &= \left|\mathcal{B}^\pi Q^\pi - \hat{\mathcal{B}}^\pi \hat{Q}^\pi \right|\nonumber\\
    &=\Bigg|\mathbb{E}_{s'\sim \mathcal{P}(\cdot \mid s, a)}\bigg[\left(\mathcal{R}_\rho(s') - \hat{\mathcal{R}}_{\rho}(s')\right)\left(r(s, a) - \frac{R_{\textrm{min}} - \epsilon}{1-\gamma}\right) \nonumber\\
    & \qquad\qquad\qquad + \gamma \mathbb{E}_{a' \sim \pi(\cdot \mid s)} \left[\mathcal{R}_\rho(s')Q^\pi(s', a') - \hat{\mathcal{R}}_\rho(s')\hat{Q}^\pi(s', a') \right]\bigg]\Bigg|\label{eq:big_eq}
\end{align}

Consider the following identity:
\begin{align}
    \mathcal{R}_\rho(s')Q^\pi(s', a') - \hat{\mathcal{R}}_\rho(s')\hat{Q}^\pi(s', a') = \mathcal{R}_\rho(s')Q^\pi(s', a') - &\mathcal{R}_\rho(s')\hat{Q}^\pi(s', a') \nonumber\\
    +&\mathcal{R}_\rho(s')\hat{Q}^\pi(s', a') - \hat{\mathcal{R}}_\rho(s')\hat{Q}^\pi(s', a')\nonumber\\
    = \mathcal{R}_\rho(s') \Big( Q^\pi(s', a') - &\hat{Q}^\pi(s', a') \Big) \nonumber\\
    +&\Big(\mathcal{R}_\rho(s') - \hat{\mathcal{R}}_\rho(s')\Big)\hat{Q}(s', a')\label{eq:simplification}
\end{align}
Plugging Eq~\ref{eq:simplification} in Eq~\ref{eq:big_eq}, we get:
\begin{align*}
   \left|Q^\pi(s, a) - \hat{Q}^\pi(s, a)\right| &= \Bigg|\mathbb{E}_{s'\sim \mathcal{P}(\cdot \mid s, a)}\bigg[\left(\mathcal{R}_\rho(s') - \hat{\mathcal{R}}_{\rho}(s')\right)\left(r(s, a) + \gamma \mathbb{E}_{a'\sim \pi(\cdot \mid s')}[\hat{Q}^\pi(s', a')] - \frac{R_{\textrm{min}} - \epsilon}{1-\gamma}\right) \nonumber\\
    & \qquad\qquad\qquad\qquad + \gamma \mathcal{R}_\rho(s') \mathbb{E}_{a' \sim \pi(\cdot \mid s)} \left[Q^\pi(s', a') - \hat{Q}^\pi(s', a')\right]\bigg]\Bigg|
\end{align*}
Taking the modulus inside the expectation and using the triangle inequality, we get:
\begin{align*}
\left|Q^\pi(s, a) - \hat{Q}^\pi(s, a)\right| \leq \mathbb{E}_{s'\sim \mathcal{P}(\cdot \mid s, a)}\bigg[\left|\mathcal{R}_\rho(s') - \hat{\mathcal{R}}_{\rho}(s')\right|\left|r(s, a) + \gamma \mathbb{E}_{a'\sim \pi(\cdot \mid s')}[\hat{Q}^\pi(s', a')] - \frac{R_{\textrm{min}} - \epsilon}{1-\gamma}\right| \nonumber\\
    \qquad\qquad\qquad\qquad + \gamma \left|\mathcal{R}_\rho(s')\right| \mathbb{E}_{a' \sim \pi(\cdot \mid s)} \left|Q^\pi(s', a') - \hat{Q}^\pi(s', a')\right|\bigg]\nonumber\\
\end{align*}
Using the following inequalities: $r(s, a) + \gamma \mathbb{E}_{a'\sim \pi(\cdot \mid s')}[\hat{Q}^\pi(s', a')] \leq R_{\textrm{max}} / (1-\gamma)$ as the maximum environment reward is $R_{\textrm{max}}$, $\left|\hat{\mathcal{R}}_\rho(s')\right| \leq 1$ as $\hat{\mathcal{R}}_\rho \in [0,1]$ and the assumption ${\lVert \mathcal{R}_{\rho} - \hat{\mathcal{R}}_{\rho} \rVert_\infty \leq \delta}$, we can complete the proof:
\begin{align*}
\left|Q^\pi(s, a) - \hat{Q}^\pi(s, a)\right| &\leq \delta\frac{R_{\textrm{max}} - R_{\textrm{min}} + \epsilon}{1-\gamma} + \gamma \mathbb{E}_{s'\sim \mathcal{P}(\cdot \mid s, a), a' \sim \pi(\cdot \mid s)} \left|Q^\pi(s', a') - \hat{Q}^\pi(s', a')\right|\nonumber\\
&\leq \delta\frac{R_{\textrm{max}} - R_{\textrm{min}} + \epsilon}{1-\gamma} + \gamma \mathbb{E}_{s'\sim \mathcal{P}(\cdot \mid s, a), a' \sim \pi(\cdot \mid s)} \left[\delta\frac{R_{\textrm{max}} - R_{\textrm{min}} + \epsilon}{1-\gamma} + \gamma \mathbb{E} \ldots \right]\\
&\leq \delta\frac{R_{\textrm{max}} - R_{\textrm{min}} + \epsilon}{(1-\gamma)^2}
\end{align*}
\end{proof}

Lemma~\ref{thm:suboptimality_bound} gives us a bound on the difference between the true $Q$-values and $Q$-values computed using the reversibility estimator $\hat{\mathcal{R}}_\rho$ for any policy $\pi$. The bound and the proof also suggest that closer $\hat{\mathcal{R}}_\rho(s)$ is to $\mathcal{R}_\rho$, closer the estimated $Q$-values are to true ones.

Similar to the empirical Bellman backup, we can define the empirical Bellman optimality operator:
\begin{align}
    \hat{\mathcal{B}}^* Q(s, a) = \mathbb{E}_{s'\sim \mathcal{P}(\cdot \mid s, a)}\left[\hat{\mathcal{R}}_{\rho}(s')\left(r(s, a) + \gamma \max_{a'} Q(s', a')\right) + \left(1-\hat{\mathcal{R}}_\rho(s')\right) \frac{R_{\textrm{min}} - \epsilon}{1-\gamma}\right]\label{eq:bellman_optimal_unsafe}
\end{align}
The empirical Bellman optimality operator is also a contraction, following a proof similar as that for empirical Bellman backup. Let $\hat{Q}^*$ denote the fixed point of Bellman optimality operator, that is $\hat{B}^* \hat{Q}^* = \hat{Q}^*$, and let $\hat{\pi}^*$  denote the greedy policy with respect to $\hat{Q}^*$. As a final result,

\textbf{Theorem 5.1.}
\textit{Let $\pi^*$ denote the optimal policy and $Q^*$ denote the corresponding optimal $Q$-value function. Let $\hat{\pi}^*$ denote the optimal policy returned by empirical Bellman optimality operator $\hat{\mathcal{B}}^*$. Assuming $\lVert \mathcal{R}_{\rho} - \hat{\mathcal{R}}_{\rho} \rVert_\infty \leq \delta$,}
\begin{equation*}
    Q^{\hat{\pi}^*}(s, a) \geq Q^*(s, a) - \frac{2\delta \left(R_{\textrm{max}} - R_{\textrm{min}} + \epsilon\right)}{(1-\gamma)^2}
\end{equation*}
\textit{for all $(s, a) \in \mathcal{S} \times \mathcal{A}$.}

\begin{proof}
For clarity of notation, we will use $Q(\pi)$ denote $Q^\pi(s, a)$ and $\hat{Q}(\pi)$ denote $\hat{Q}^\pi(s, a)$ for a policy $\pi$. Now,
\begin{align*}
    Q^*(s, a) - Q^{\hat{\pi}^*}(s, a) = \left(Q(\pi^*) - \hat{Q}(\pi^*)\right) + \left(\hat{Q}(\pi^*) - \hat{Q}(\hat{\pi}^*)\right) + \left(\hat{Q}(\hat{\pi}^*) - Q(\hat{\pi}^*)\right)
\end{align*}

Using the fact $\hat{\pi}^*$ is the optimal policy with respect to $\hat{\mathcal{B}}^*$, we have ${\hat{Q}(\pi^*) - \hat{Q}(\hat{\pi}^*)} \leq 0$. This implies that:
\begin{align}
    Q^*(s, a) - Q^{\hat{\pi}^*}(s, a) \leq \left(Q(\pi^*) - \hat{Q}(\pi^*)\right)  + \left(\hat{Q}(\hat{\pi}^*) - Q(\hat{\pi}^*)\right)\label{eq:q_decomp}
\end{align}

Using lemma~\ref{thm:suboptimality_bound}, we have
\begin{align*}
    \left(Q(\pi^*) - \hat{Q}(\pi^*)\right) &\leq \frac{\delta \left(R_{\textrm{max}} - R_{\textrm{min}} + \epsilon\right)}{(1-\gamma)^2}\\
    \left(\hat{Q}(\hat{\pi}^*) - Q(\hat{\pi}^*)\right) &\leq \frac{\delta \left(R_{\textrm{max}} - R_{\textrm{min}} + \epsilon\right)}{(1-\gamma)^2}
\end{align*}
Plugging these in Eq~\ref{eq:q_decomp}, we get
\begin{align*}
    Q^*(s, a) - Q^{\hat{\pi}^*}(s, a) \leq \frac{2\delta \left(R_{\textrm{max}} - R_{\textrm{min}} + \epsilon\right)}{(1-\gamma)^2}
\end{align*}

Rearranging the above bound gives us the statement in the theorem.
\end{proof}

Theorem~\ref{thm:final} gives us the assurance that as long as the estimator $\hat{\mathcal{R}}_\rho$ is close to $\mathcal{R}_\rho$, the $Q$-values of the greedy policy obtained by value iteration using $\hat{\mathcal{B}}^*$, i.e. $\hat{\pi}^*$ will not be much worse than the optimal $Q$-values. The above result also suggests choosing a smaller $\epsilon$ for a smaller gap in performance.

\subsection{Detailed Pseudocode}
\label{app:detailedpseudocode}

In Algorithms~\ref{alg:episodic_paint} and~\ref{alg:nonepisodic_paint}, we provide pseudo-code for the episodic and non-episodic variants of our method PAINT. We build the non-episodic variant upon the MEDAL algorithm~\cite{sharma2022state}, which introduces a backward policy whose objective is to match the state distribution of the forward demonstrations (summarized in Section~\ref{sec:prelims}).


\begin{algorithm}
    \SetAlgoLined
    \textbf{optional:} forward demonstrations $\mathcal{N}$ \\
    \textbf{initialize:} $\pi, Q, \mathcal{D}$; {\color{olive} // forward agent parameters} \\
    \textbf{initialize} $\hat{\mathcal{R}}_\rho, \mathcal{D}_\rho$; {\color{olive} // reversibility classifier and dataset of labels} \\
    {\color{olive} // add demonstrations to replay buffer}\\
    $\mathcal{D} \gets \mathcal{D} \cup \mathcal{N}$\; 
    \While{not done}{
        $s \sim \rho_0$ {\color{olive}// reset environment} \\
        $\mathcal{D}_{\textrm{new}} \gets \mathcal{D}_{\textrm{new}} \cup \{s\}$\;
        aborted $\leftarrow$ False; \\
        \For{$t = 1, 2, \dots, H$}{
            \If{not aborted}{
                $a \sim \pi( \cdot \mid s)$; \\
                update $\pi, Q$; {\color{brown}// Eq~\ref{eq:bellman_empricial_unsafe}}
            }
            \Else{
                $a \sim \textrm{unif}(\mathcal{A})$;
            }
            $s' \sim \mathcal{P}(\cdot \mid s, a), r \leftarrow r(s, a)$; \\
            $\mathcal{D} \gets \mathcal{D} \cup \{(s, a, s', r)\}$; \\
            \If{not aborted and $\hat{R}_\rho(s) \le 0.5$}{
                aborted $\leftarrow$ True;
            }
            $\mathcal{D}_{\textrm{new}} \gets \mathcal{D}_{\textrm{new}} \cup \{s'\}$\;
            $s \leftarrow s'$;
        }
        {\color{olive}// query reversibility labels for newly collected states via Alg.~\ref{alg:bin_search}} \\
        label $\mathcal{D}_{\textrm{new}}$\;
        $\mathcal{D}_\rho \leftarrow \mathcal{D}_\rho \cup \mathcal{D}_{\textrm{new}}$\;
        $\mathcal{D}_{\textrm{new}} \gets \emptyset$\;
        {\color{olive}// train classifier on all labeled data, new and old} \\
        update $\hat{\mathcal{R}}_\rho$;
    }
    \caption{PAINT (Episodic)}
    \label{alg:episodic_paint}
\end{algorithm}

\begin{algorithm}
    \SetAlgoLined
    \textbf{input:} forward demonstrations $\mathcal{N}_f$\;
    \textbf{optional:} backward demonstrations $\mathcal{N}_b$\;
    \textbf{initialize:} $\pi_f, Q^f, \mathcal{D}_f$; {\color{olive} // forward agent parameters}\\
    \textbf{initialize:} $\pi_b, Q^b, \mathcal{D}_b$; {\color{olive} // backward agent parameters}\\
    \textbf{initialize} $\hat{\mathcal{R}}_\rho, \mathcal{D}_\rho$; {\color{olive} // reversibility classifier and dataset of labels} \\
    \textbf{initialize} $C(s)$; {\color{olive} // state-space discriminator for backward policy} \\
    {\color{olive} // add demonstrations to replay buffer}\\
    $\mathcal{D}_f \gets \mathcal{D}_f \cup \mathcal{N}_f$\; 
    $\mathcal{D}_b \gets \mathcal{D}_b \cup \mathcal{N}_b$\;
    $\mathcal{D}_{\textrm{new}} \gets \emptyset$ \;
    \While{not done}{
        $s \sim \rho_0$; {\color{olive}// reset environment} \\
        $\mathcal{D}_{\textrm{new}} \gets \mathcal{D}_{\textrm{new}} \cup \{s\}$\;
        {\color{olive}// continue till the reversibility classifier detects an irreversible state} \\
        \While{$\hat{\mathcal{R}}_\rho(s) > 0.5$}{
            {\color{olive}// run forward policy for a fixed number of steps, switch to backward policy} \\
            \If{forward}{
                $a \sim \pi_f( \cdot \mid s)$;\\
                $s' \sim \mathcal{P}(\cdot \mid s, a), r \leftarrow r(s, a)$;\\
                $\mathcal{D}_f \gets \mathcal{D}_f \cup \{(s, a, s', r\}$; \\
                update $\pi_f, Q^{f}$; {\color{brown}// Eq~\ref{eq:bellman_empricial_unsafe}} \\
            }
            \Else{
                $a \sim \pi_b( \cdot \mid s)$;\\
                $s' \sim \mathcal{P}(\cdot \mid s, a), r \leftarrow -\log (1 - C(s'))$;\\
                $\mathcal{D}_b \gets \mathcal{D}_b \cup \{(s, a, s', r)\}$; \\
                update $\pi_b, Q^{b}$; {\color{brown}// Eq~\ref{eq:bellman_empricial_unsafe}} \\
            }
            {\color{olive} // train disriminator every $K$ steps}\\
            \If{train-discriminator}{
                {\color{olive} // sample a batch of positives $S_p$ from the forward demos $\mathcal{N}_f$, and a batch of negatives $S_n$ from backward replay buffer $\mathcal{D}_b$}\\
                $S_p \sim \mathcal{N}_f, S_n \sim \mathcal{D}_b$;\\
                update $C$ on $S_p \cup S_n$;\\
            }
            $\mathcal{D}_{\textrm{new}} \gets \mathcal{D}_{\textrm{new}} \cup \{s'\}$\;
            $s \gets s'$;\\
        }
        {\color{olive}// optionally explore environment}\\
        \For{explore steps}{
            $a \sim \textrm{unif}(\mathcal{A})$, $s' \sim \mathcal{P}(\cdot \mid s, a), r \gets r(s, a)$\;
            update $\mathcal{D}_f, \mathcal{D}_b$; {\color{olive} // use $C(s)$ for the reward labels in $\mathcal{D}_b$}\\
            $\mathcal{D}_{\textrm{new}} \gets \mathcal{D}_{\textrm{new}} \cup \{s'\}$\;
            $s \gets s'$;\\
        }
        {\color{olive}// query reversibility labels for newly collected states via Alg~\ref{alg:bin_search}}\\
        label $\mathcal{D}_{\textrm{new}}$\;
        $\mathcal{D}_\rho \leftarrow \mathcal{D}_\rho \cup \mathcal{D}_{\textrm{new}}$\;
        $\mathcal{D}_{\textrm{new}} \gets \emptyset$\;
        {\color{olive}// train classifier on all labeled data, new and old}\\
        update $\hat{\mathcal{R}}_\rho$ on $\mathcal{D}_\rho$\;
    }
    \caption{PAINT with MEDAL~\citep{sharma2022state} (Non-episodic)
    }
    \label{alg:nonepisodic_paint}
\end{algorithm}

\subsection{Environment Details}
\label{app:envs}

In this section, we provide details of each of the four irreversible environments, which are visualized in Fig.~\ref{fig:envs} and Fig.~\ref{fig:maze_results}.


\textbf{Maze}. In this $2$-D continuous-control environment, the agent is a point mass particle that starts in the top left corner of the maze and must reach the bottom right corner. Throughout the environment, there are trenches (marked in black) that the agent must avoid. Entering them is irreversible: the agent can roam freely within the trench but cannot leave it without an environment reset. The agent is placed back at the top left corner upon a reset, which is provided every $500$ time-steps. 
The agent's state space consists of its $xy$-position, and two control inputs correspond to the change applied to its $xy$-position. The reward function is defined as $r_t = \mathds{1}( \| s_t - g \|_2 < 0.1)$, where $g$ is the goal position. We provide $10$ demonstrations of the task to the agent at the beginning of training.

\textbf{Tabletop Organization}. This environment modifies the Tabletop Organization task from~\citet{sharma2021autonomous}. The agent's objective is to grasp the mug and move it to one of the four specified goal positions. Grasping the mug and dropping it off beyond the red boundary is irreversible, i.e., the agent can no longer re-grasp the mug. Upon a reset, which is provided every $200$K time-steps or when requested by the agent, the agent is placed back at the center of the table and the mug is placed on its right, just within the red boundary (see Fig.~\ref{fig:envs}). The agent's state space consists of the its own $xy$-position, the mug's $xy$-position, an indicator of whether the mug is grasped, the goal position of the mug, and finally, the goal position of the agent after putting the mug down at its goal. There are three control inputs, which apply changes to the agent's $xy$-position and toggle between grasping, if the object is nearby (i.e., within a distance of $0.4$), and releasing, if the object is currently grasped. The agent's reward function is $r_t = \mathds{1}(\| s_t - g \|_2 < 0.1)$, i.e., both the agent's $xy$-position and the mug's $xy$-position must be close to their targets. We provide $50$ forward demonstrations, $50$ backward demonstrations, and $1000$ examples of (randomly generated) irreversible states to the agent at the beginning of training.

\textbf{Peg Insertion}. This environment modifies the Peg Insertion task from~\citet{sharma2021autonomous}. The objective of this task is to grasp and insert the peg into the hole in the box. We modified the table so that the raised edges that stop the peg from rolling off the table are removed and the table is significantly narrower. Hence, the peg may fall off the table, which cannot be reversed by the robot. Instead, when the environment is reset, which automatically occurs every $100$K time-steps or when requested by the agent, the peg is placed back at one of $15$ possible initial positions on the table. The agent's state space consists of the robot's $xyz$-position, the distance between the robot's gripper fingers, and the object's $xyz$-position. The agent's action consists of 3D end-effector control and normalized gripper torque. Let $s^\text{peg}_t$ represent the state of the peg and $g^\text{peg}$ be its goal state, then the reward function is $r_t = \mathds{1}(\|s^\text{peg}_t - g^\text{peg} \|_2 < 0.05 )$. We provide the agent with $12$ forward demonstrations and $12$ backward demonstrations. 

\textbf{Half-Cheetah}. We design this environment based on the version from~\citet{brockman2016openai}. In particular, the agent must run at one of six target velocities $\{ 3, 4, 5, 6, 7, 8 \}$, specified to the agent. Every $500$ time-steps, the target velocity switches to a different value selected at random. The agent's actions, which correspond to torque control of the cheetah's six joints, are scaled up by a factor of $5$. When the agent is flipped onto its back (i.e., its normalized orientation is greater than $2 \pi / 3$), we label these states irreversible. When the agent is reset, which only occurs when requested, the agent is placed upright again at angle of $0$. The agent's observation space consists of the velocity of the agent’s center of mass, angular
velocity of each of its six joints, and the target velocity. Let $v$ be the velocity of the agent, then the reward function, which is normalized to be between $0$ and $1$, is $r_t = 0.95 * (8 - v) / 8 + 0.05 * (6 - \|a_t\|_2^2) / 6$. There are no demonstrations for this task.

\subsection{Implementation Details}
\label{app:implementation}

Below, we provide implementation details of our algorithm PAINT and the baselines. Every algorithm, including ours, has the following components.

\textit{Forward policy network}. The agent's forward policy is represented by an MLP with $2$ fully-connected layers of size $256$ in all experimental domains, trained with the Soft Actor-Critic (SAC)~\citep{haarnoja2018soft} algorithm.

\textit{Forward critic network}. The agent's forward critic is represented by an MLP with $2$ fully-connected layers of size $256$ in all experimental domains, trained with the Soft Actor-Critic (SAC)~\citep{haarnoja2018soft} algorithm.

\textit{Balanced batches}. In the Tabletop Organization and Peg Insertion tasks, the agent's forward policy and critic networks are trained with batches that consist of demonstration tuples and of tuples sampled from the agent's online replay buffer. We control the ratio $p$ of demonstration tuples to online tuples with a linearly decaying schedule,
$$
p_t = \begin{cases}
    \frac{(p_T - p_0)}{T} t + p_0 & t < T \\
    p_T & T \ge t.
\end{cases}
$$
In the Tabletop Organization and Peg Insertion tasks, $p_0 = 0.5$, $p_T = 0.1$, and $T = 500$K. We do not train with balanced batches in the Maze task, and simply populate the online replay buffer with the demonstrations at the beginning of training.

\textbf{Episodic RL (Soft Actor-Critic)}~\citep{haarnoja2018soft}.
In addition to the policy and critic networks trained with balanced batches, the episodic RL comparison requests for resets every $H'$ time-steps in the Tabletop Organization ($H' = 2000$), Peg Insertion ($H' = 1000$), and Half-Cheetah ($H' = 2000$) tasks. 

\textbf{Safe Model-Based Policy Optimization (SMBPO)}~\citep{thomas2021safe}.
This comparison trains the forward critic with the modified Bellman update $\mathcal{B}^\pi Q(s, a)$ defined in Eqn.~\ref{eq:bellman_unsafe}, where $\epsilon = 0$ in the Maze and Half-Cheetah tasks and $\epsilon = -0.1$ in the Tabletop Organization and Peg Insertion tasks. 

\textbf{Safety Q-functions for RL (SQRL)}~\citep{srinivasan2020learning}.
This comparison trains an additional safety critic $Q^\pi_\text{safe}$, which estimates the future probability of entering an irreversible state, and the policy is updated with the following augmented objective
$$
J^\pi_\text{safe}(\nu) = J^\pi + \mathbb{E}_{s \sim \mathcal{D}, a \sim \pi(\cdot | s)} \left[ \nu \left(\epsilon_\text{safe}, Q^\pi_\text{safe}(s, a) \right) \right],
$$
where $\nu$ is the Lagrange multiplier for the safety constraint and is updated via dual gradient descent. We only evaluate this comparison in the Maze task, where $\epsilon_\text{safe} = 10$. 

\subsubsection{Forward-Backward Algorithms}

In the forward-backward setups, we train a backward policy and critic in addition to their forward counterparts. The details of the backward components that are shared across all methods are described below.

\textit{Backward policy network}. The agent's backward policy is represented by an MLP with $2$ fully-connected layers of size $256$ in all experimental domains, trained with the SAC algorithm.

\textit{Backward critic network}. The agent's backward critic is represented by an MLP with $2$ fully-connected layers of size $256$ in all experimental domains, trained with the SAC algorithm.

\textit{Backward balanced batches}. The agent's backward policy and critic networks are trained with batches that consist of demonstration tuples and of tuples sampled from the agent's online replay buffer. We control the ratio $p$ of demonstration tuples to online tuples with a linearly decaying schedule,
$$
p_t = \begin{cases}
    \frac{(p_T - p_0)}{T} t + p_0 & t < T \\
    p_T & T \ge t.
\end{cases}
$$
In the Tabletop Organization and Peg Insertion tasks, $p_0 = 0.5$, $p_T = 0.1$, and $T = 500$K. 

\textbf{Leave No Trace (LNT)}~\citep{eysenbach2017leave}.
This comparison additionally trains backward policy and critic networks, whose reward function is the sparse indicator of whether the current state is within some threshold of the initial state. The thresholds are the same as those used for the forward reward functions defined in~\ref{app:envs}. 

\textit{Policy switching}. Leave No Trace switches from the forward to backward policy if the backward critic's $Q$-value is lower than $\epsilon_\text{LNT}$ or after $300$ time-steps, and switches from the backward to forward policy after $300$ time-steps. In Tabletop, $\epsilon_\text{LNT} = 0.1$, and in Peg Insertion, $\epsilon_\text{LNT} = 0.005$.

\textit{Termination condition}. Leave No Trace additionally requests a reset if, after $300$ time-steps, the backward policy fails to bring the environment within a distance of $0.1$ of the initial state.

\textbf{Matching Expert Distributions for Autonomous Learning (MEDAL)}~\citep{sharma2022state}.
Like Leave No Trace, MEDAL trains a backward policy and critic. However, instead of returning to the initial state, the backward reward function is whether the current state matches the distribution of demonstration states, formally defined in Eqn.~\ref{eqn:medal}.

\textit{MEDAL classifier}. The classifier $C$ in Eqn.~\ref{eqn:medal} is represented by an MLP with $1$ FC layer of size $128$.

\textit{Policy switching}. The algorithm switches policies (i.e., from forward to backward and from backward to forward) after every $300$ time-steps.

\textbf{Reversibility-Aware Exploration (RAE)}~\citep{grinsztajn2021there}. RAE trains a self-supervised reversibility estimator, specifically to predict whether a state transition $(s, \tilde{s})$ is more likely than the reverse transition $(\tilde{s}, s)$. RAE generates data for the binary classifier with a windowed approach. For every state trajectory $(s_{t:t+w})$ of length $w$ collected by the agent, all state pairs $(s_i, s_j)$, where $i < j$, are labeled \emph{positive}, and all pairs $(s_j, s_i)$, where $i < j$, are labeled \emph{negative}. For all experimental tasks, we use a window size of $w = 10$ time-steps. With this estimator, the forward critic is trained with the modified Bellman update $\hat{\mathcal{B}}^\pi Q(s, a)$, where $\epsilon = 0$ in the Maze and Half-Cheetah tasks and $\epsilon = -0.1$ in the Tabletop Organization and Peg Insertion tasks.

\textit{Reversibility classifier}. The classifier $\hat{\mathcal{R}}_\rho$ is represented by an MLP with $1$ FC layer of size $128$.

\textit{Termination condition}. In Maze and Half-Cheetah, the termination condition is $\hat{\mathcal{R}}_\rho > 0.5$. In Tabletop Organization and Peg Insertion, the condition is $\hat{\mathcal{R}}_\rho > 0.8$.

\textit{Exploration}. We augment RAE with uniform-random exploration after the termination condition is met as proposed in our method. In the Maze environment, the agent takes uniform-random actions for the rest of the episode (of length $500$). In Tabletop Organization and Peg Insertion, $N_\text{explore} = 300$ time-steps. In Half-Cheetah, $N_\text{explore} = 500$ time-steps.

In the Tabletop Organization and Peg Insertion tasks, we train an additional backward policy and critic, whose reward functions are defined in terms of the MEDAL classifier. The backward critic is also trained with the modified $\hat{\mathcal{B}}^\pi Q(s, a)$, with the same hyperparameters as the forward critic.

\textit{MEDAL classifier}. The classifier $C$ in Eqn.~\ref{eqn:medal} is represented by an MLP with $1$ FC layer of size $128$.

\textit{Policy switching}. The algorithm switches policies (i.e., from forward to backward and from backward to forward) after every $300$ time-steps.

\textbf{PAINT (Ours)}.
PAINT trains a reversibility classifier $\hat{\mathcal{R}}_\rho$ and checks whether the current state is estimated to be irreversible. If it is estimated to be irreversible, the agent takes uniform-random actions for $H_\text{explore}$ time-steps and requests a reset afterward. The forward critic is also trained with the modified Bellman update $\hat{\mathcal{B}}^\pi Q(s, a)$, where $\epsilon = 0$ in the Maze and Half-Cheetah tasks and $\epsilon = -0.1$ in the Tabletop Organization and Peg Insertion tasks.

\textit{Reversibility classifier}. The classifier $\hat{\mathcal{R}}_\rho$ is represented by an MLP with $1$ FC layer of size $128$.

\textit{Termination condition}. In all tasks, the termination condition is $\hat{\mathcal{R}}_\rho > 0.5$.

\textit{Exploration}. In the Maze environment, the agent takes uniform-random actions for the rest of the episode (of length $500$). In Tabletop Organization and Peg Insertion, $N_\text{explore} = 300$ time-steps. In Half-Cheetah, $N_\text{explore} = 500$ time-steps.

In the Tabletop Organization and Peg Insertion tasks, we train an additional backward policy and critic. The details for the backward policy and critic are the same as in RAE. 






\subsubsection{Codebase}

We have publicly released our code at this GitHub repo: \url{https://github.com/tajwarfahim/proactive_interventions}. Our codebase builds on top of codebases from~\citet{yarats2021drqv2, yarats2021image} and~\citet{sharma2021autonomous}.

\subsection{Discussion on Comparisons}
\label{app:comparisons}
\begin{table}[h]
    \small
    \centering
    \setlength{\tabcolsep}{4pt}
    \begin{tabular}{lccc}
        \toprule
        Method & Forward-Backward & Requires Reversibility Labels? & Intervention Rule  \\
        \midrule
        Episodic RL (SAC)~\citep{haarnoja2018soft} & No & No & N/A \\
        SMBPO~\citep{thomas2021safe} & No & Yes & N/A \\
        SMBPO w. Oracle Term & No & Yes & Oracle \\
        SQRL~\citep{srinivasan2020learning} & No & Yes & N/A \\
        LNT~\citep{eysenbach2017leave} & Yes & No & $s \not\in \text{supp}(\rho_0)$ \\
        MEDAL~\citep{sharma2022state} & Yes & No & N/A \\
        RAE~\citep{grinsztajn2021there} & Yes & No & $\hat{\mathcal{R}}_\textsc{RAE}(s) > p$ \\
        PAINT (Ours) & Yes & Yes & $\hat{\mathcal{R}}_\rho(s) < 0.5$ \\
        \bottomrule
    \end{tabular}
    \vspace{0.2cm}
    \caption{\small Summary of assumptions for each method.}
    \label{tab:comparisons}
    \vspace{-0.4cm}
\end{table}
The algorithms we compare to make varying assumptions, which we summarize in Table~\ref{tab:comparisons}. In particular, SMBPO~\citep{thomas2021safe} and SQRL~\citep{srinivasan2020learning} are safe RL methods, which we adapt to the setting in this work. They use reversibility labels as safety labels, but lack a backward policy to reset the agent and an intervention rule. We therefore only study these comparisons in the episodic setting. In the continuing setup, we equip SMBPO with an oracle intervention rule, which calls for a reset when the agent enters an irreversible state. The next comparisons LNT~\citep{eysenbach2017leave} and MEDAL~\citep{sharma2022state} are explicitly designed for the autonomous RL setting: they train forward and backward policies, which take alternate turns controlling the agent. They however do not use reversibility labels. Instead, LNT requires a different assumption, checking whether the agent has returned back to the support of the initial state distribution, while MEDAL has no defined intervention rule. Finally, we adapt RAE~\citep{grinsztajn2021there}, which is not originally designed for autonomous RL, by introducing a backward policy and designing an intervention rule based on the RAE reversibility classifier. Notably, RAE trains its classifier with self-supervised labels, and so does not require any explicit reversibility labels provided by an expert.

\subsection{Additional Experimental Results}
\label{app:add_exps}
In this section, we present additional plots to accompany Section~\ref{subsection:main_results} and additional ablations to accompany Section~\ref{subsection:ablations}.

\subsubsection{Additional Plots}

\begin{figure}
    \centering
    \includegraphics[width=\linewidth]{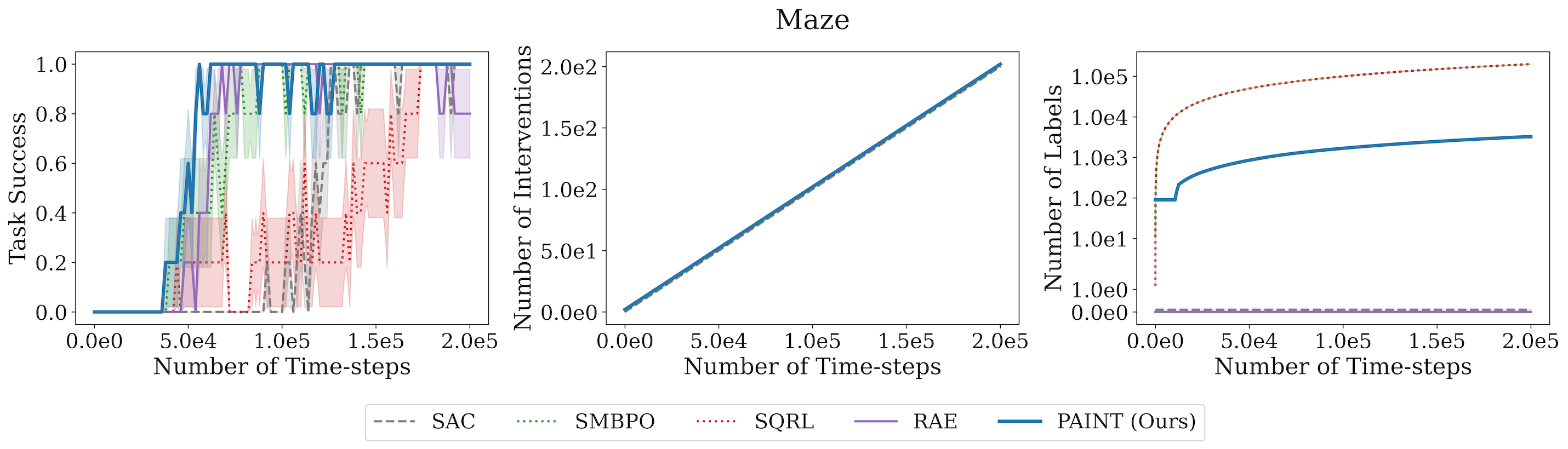} \\
    \vspace{0.3cm}
    \includegraphics[width=\linewidth]{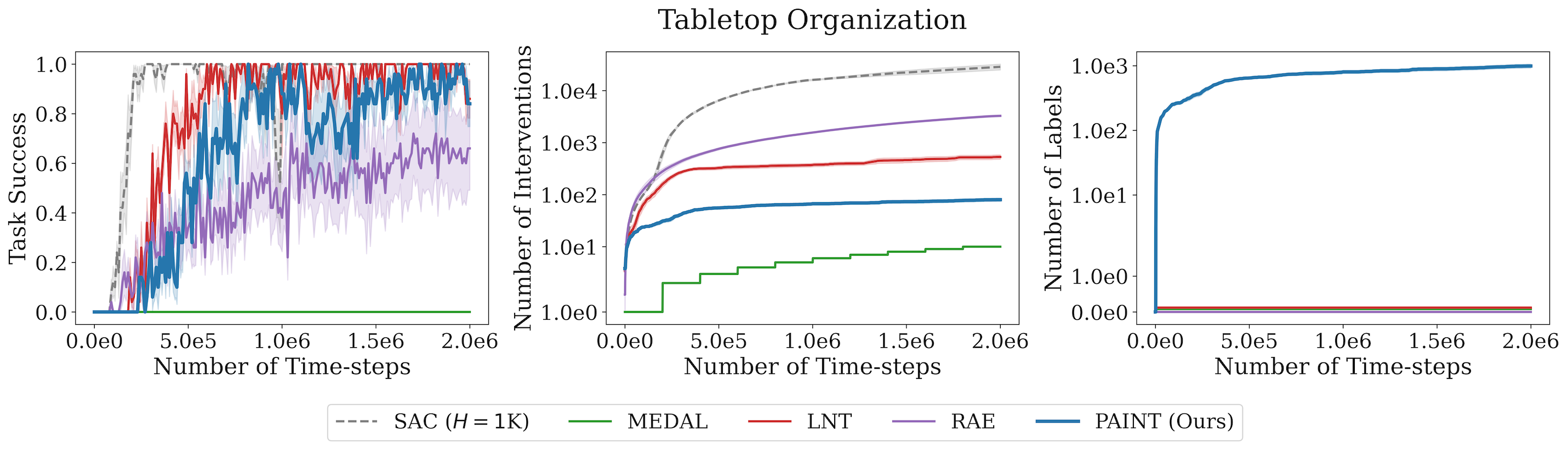} \\
    \vspace{0.3cm}
    \includegraphics[width=\linewidth]{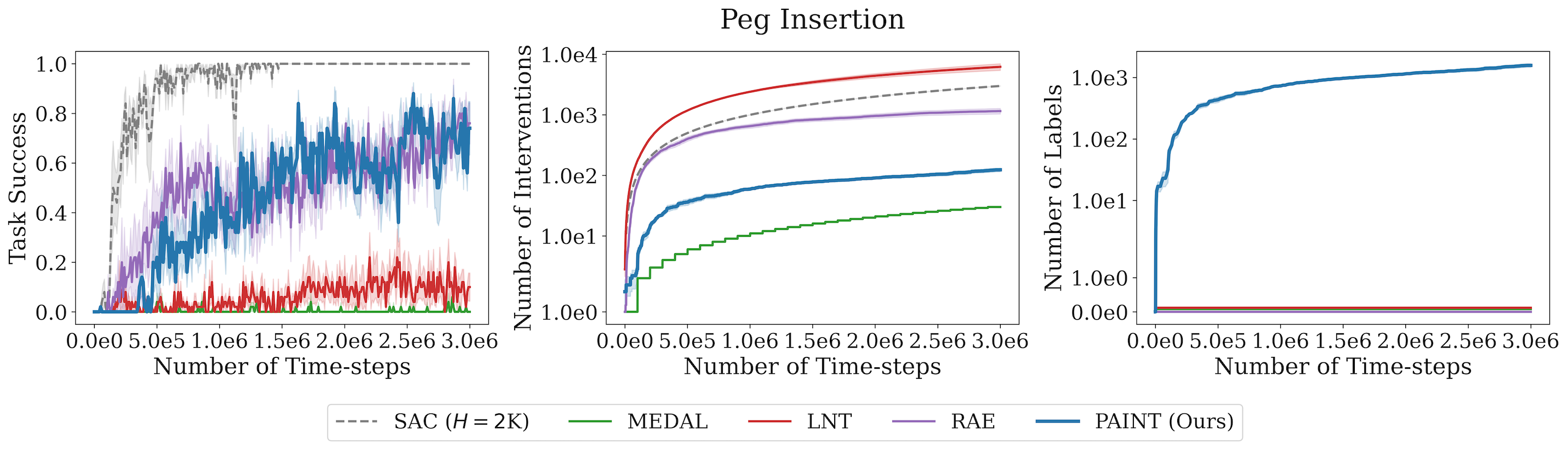} \\
    \vspace{0.3cm}
    \includegraphics[width=\linewidth]{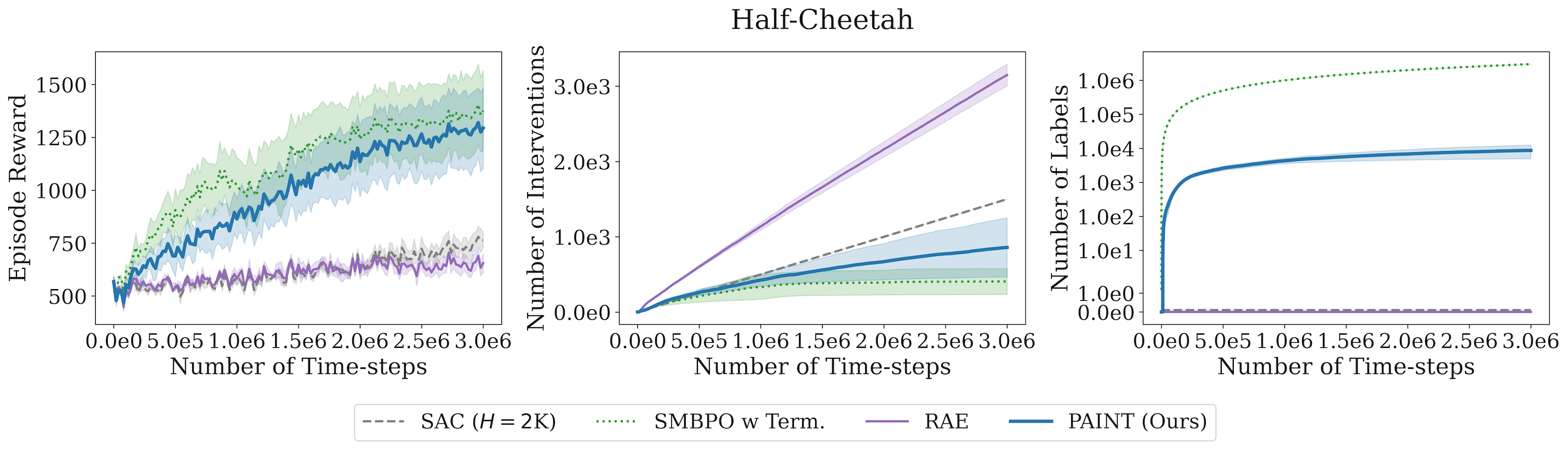}
    \caption{\small \textit{(left)} Task success versus time. \textit{(middle)} Number of interventions versus time. \textit{(right)} Number of queried labels versus time.}
    \label{fig:main_results_app}
\end{figure}

In Fig.~\ref{fig:main_results_app}, we plot the task success, number of reset interventions, and number of reversibility labels versus the number of time-steps in each of the four tasks.

\subsubsection{Ablations and Sensitivity Analysis}

\begin{figure*}
    \centering
    \includegraphics[width=.66\linewidth]{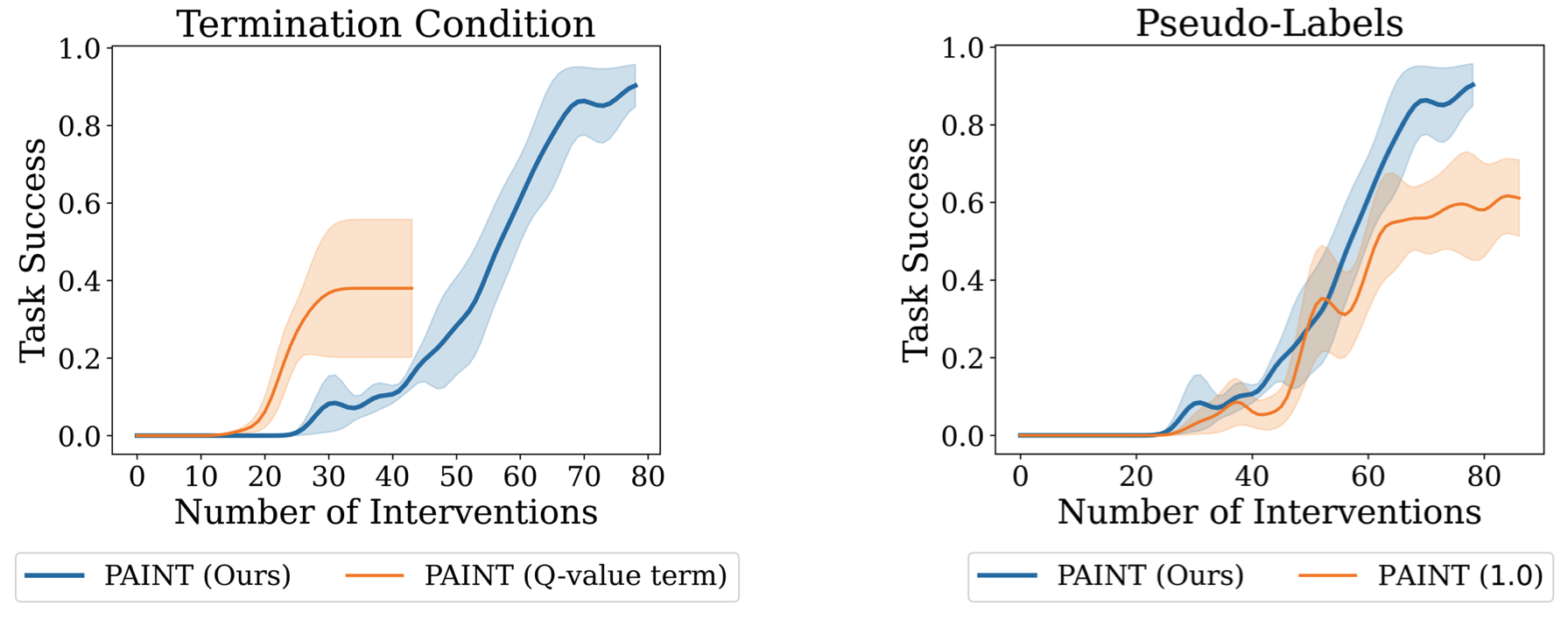}
    \caption{\small 
    (\textit{left}) We define a termination condition based on the $Q$-values, which learns with fewer resets but also achieves significantly lower final performance. (\textit{right}) For states collected during a trial, their reversibility labels are still unknown. When training the agent with Eqn.~\ref{eq:bellman_empricial_unsafe}, we use predictions from the reversibility classifier as pseudo-labels. We compare to labeling the unlabeled states one, i.e., treating them as reversible.}
    \label{fig:ablations_app}
\end{figure*}

\textbf{Termination conditions}. To evaluate the importance of the reversibility classifier, we study an alternative choice for the termination condition, one based on the $Q$-value function. Intuitively, the values for irreversible states will be low, as there is no path that leads to the goal from these states. We define the value-based termination condition as $V^\pi(s) < \epsilon$ where $\epsilon$ is the threshold. 
We approximate the value $V^\pi(s) = \mathbb{E}_\pi [Q^\pi(s, a)]$ with the trained $Q$-value function evaluated at $N=10$ policy actions. We plot the task success in the Tabletop Manipulation task in Fig.~\ref{fig:ablations_app} (left). After $1$M time-steps, the $Q$-value-based termination condition requires fewer interventions, approximately half of the interventions needed by PAINT, but does not converge to the same final performance as PAINT with the reversibility classifier. 
Critically, PAINT with $Q$-value termination also trains a reversibility classifier to generate pseudo-labels for unlabeled states.

\textbf{Pseudo-labels for unlabeled states}. Currently, when updating the agent with Eqn.~\ref{eq:bellman_empricial_unsafe}, our method uses the predictions from the stuck classifier as pseudo-labels for the unlabeled states collected during a trial. An alternative choice is labeling them one, i.e., treating them as reversible. We evaluate this choice in the Tabletop Manipulation task, and plot the task success in Fig.~\ref{fig:ablations_app} (right). After $1$M time-steps, the number of reset interventions requested by both methods are similar. However, our method succeeds at the task almost $100\%$ of the time, while the agent trained with pseudo-labels of one only achieves success of around $60\%$.

\begin{figure}
    \centering
    \includegraphics[width=\linewidth]{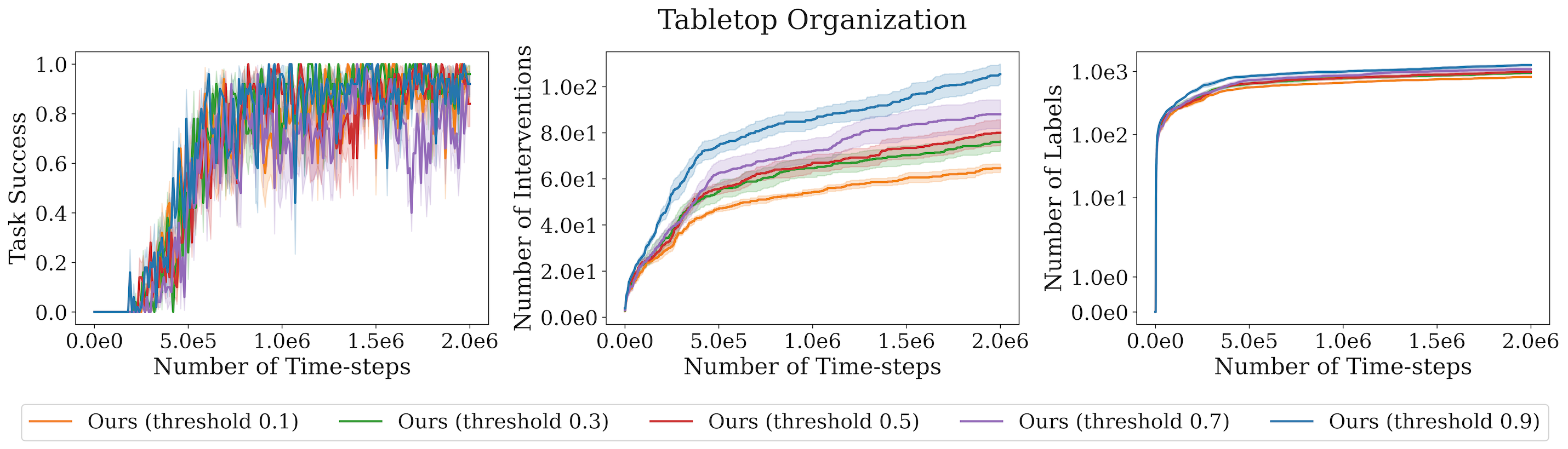} \\
    \vspace{0.3cm}
    \includegraphics[width=\linewidth]{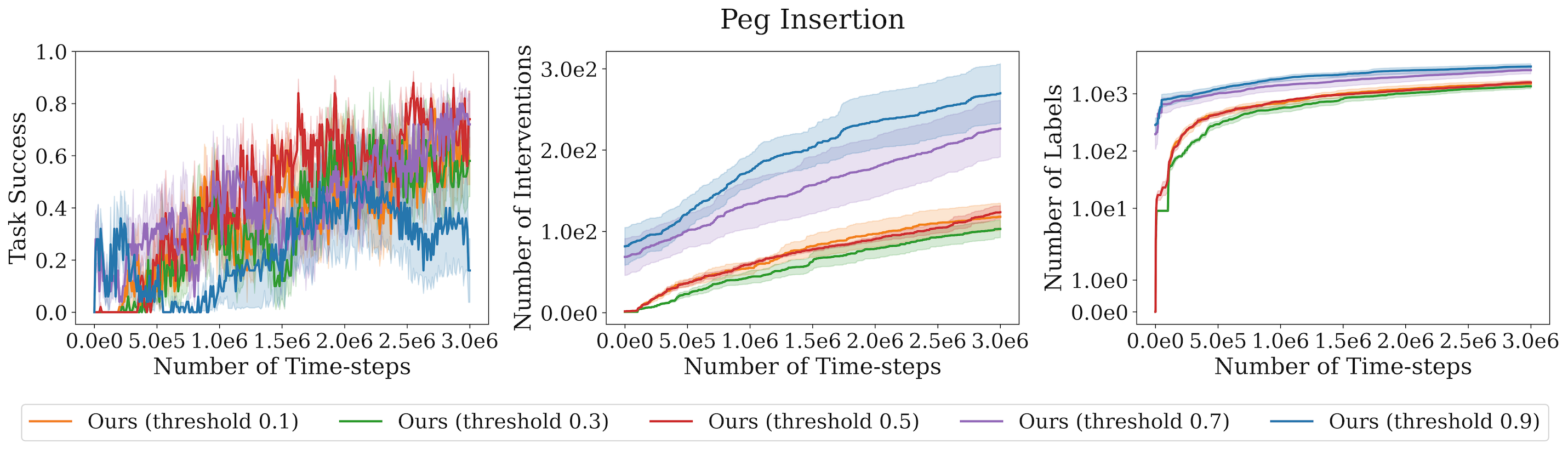} \\
    \vspace{0.3cm}
    \includegraphics[width=\linewidth]{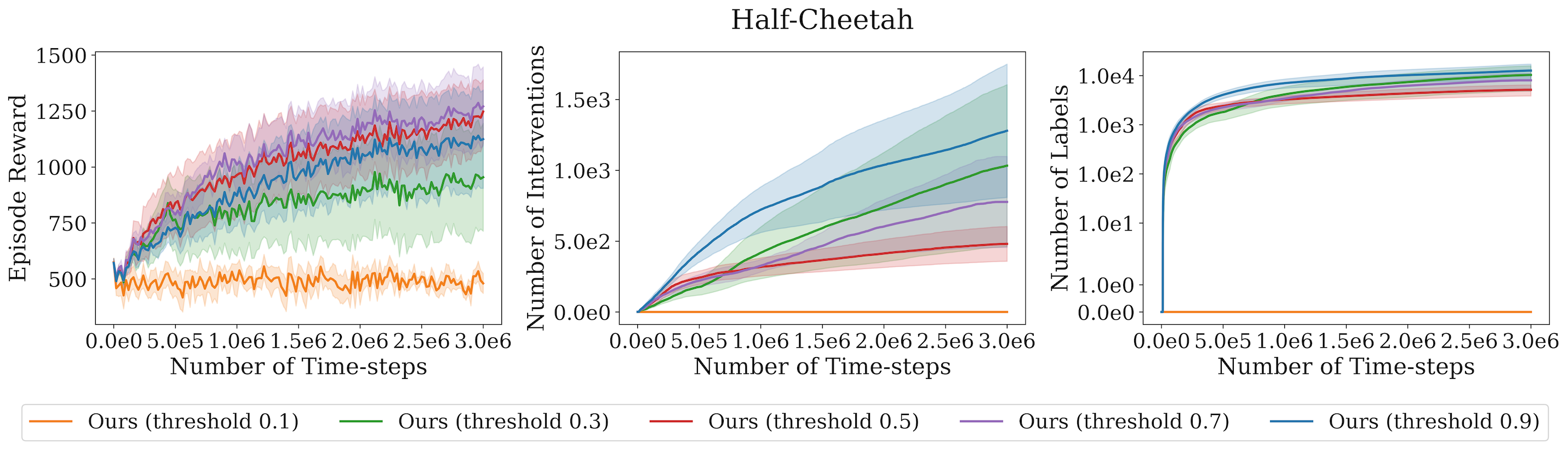}
    \caption{\small Task success (\textit{left}), number of interventions (\textit{middle}), and number of labels (\textit{right}) versus time for different threshold values.}
    \label{fig:threshold}
\end{figure}

\textbf{Reversibility classifier threshold.} The threshold that determines when to request an intervention can affect the performance of PAINT. Setting the threshold too low can result in the agent being stuck in irreversible states for a long time before calling for a reset while too high a threshold may trigger the reset too many times. To understand its sensitivity, we evaluated PAINT on the following threshold values, $\{0.1, 0.3, 0.5, 0.7, 0.9\}$, in the non-episodic tasks. The results, visualized in Fig.~\ref{fig:threshold}, indicate that PAINT with larger values like $0.9$ can perform worse, but across all tasks, a threshold value of $0.5$ yields high success rates while requiring few reset interventions.

\textbf{Reducing label queries with classifier confidence.} Rather than querying labels at every iteration of our binary search procedure, we study a variant of PAINT where we query labels from the expert only when there is uncertainty in the prediction. Specifically on the Maze environment, we study two forms of uncertainty: (a) the confidence of the classifier given by its output probability (predictions close to $0.5$ are uncertain) and (b) epistemic uncertainty captured by an ensemble of classifiers.
Only when the predictor is uncertain, i.e., $|\hat{\mathcal{R}}_\rho(s) - 0.5| < p$ in (a) and $\textsc{std}(\hat{\mathcal{R}}_\rho^i(s)) > p$ in (b), do we need to query the reversibility label. Otherwise, we can use the classifier's prediction in place of querying, which can further reduce the number of labels required by PAINT. 

In Fig.~\ref{fig:confidence} (top), we visualize the results of PAINT that queries based on the confidence of the classifier, when $|\hat{\mathcal{R}}_\rho(s) - 0.5| < p$ for $p \in \{ 0.1, 0.4 \}$, and we find that PAINT can achieve similar success rates even when it only queries labels for states with low-confidence predictions. In particular, PAINT with confidence-based querying requires fewer than $100$ labels compared to the $3$K labels required by standard PAINT. In Fig.~\ref{fig:confidence} (bottom), we report the results for PAINT with querying based on the uncertainty of an ensemble of classifiers, when $\textsc{std}(\hat{\mathcal{R}}_\rho^i(s)) > p$ for $p \in \{ 0.01, 0.001, 0.0001 \}$. Here, we find that a threshold of $0.0001$ achieves high success rates while only requiring around $750$ labels. These are promising improvements, and future work can extend them to more domains and more sophisticated querying strategies.

\begin{figure}
    \centering
    \includegraphics[width=\linewidth]{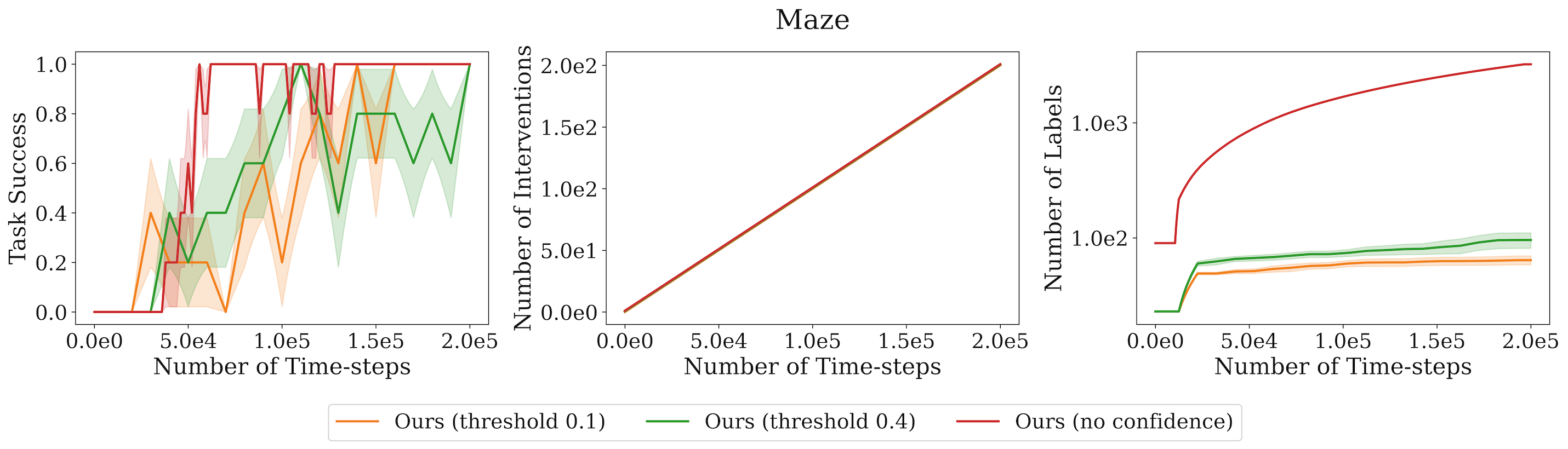}
    \includegraphics[width=\linewidth]{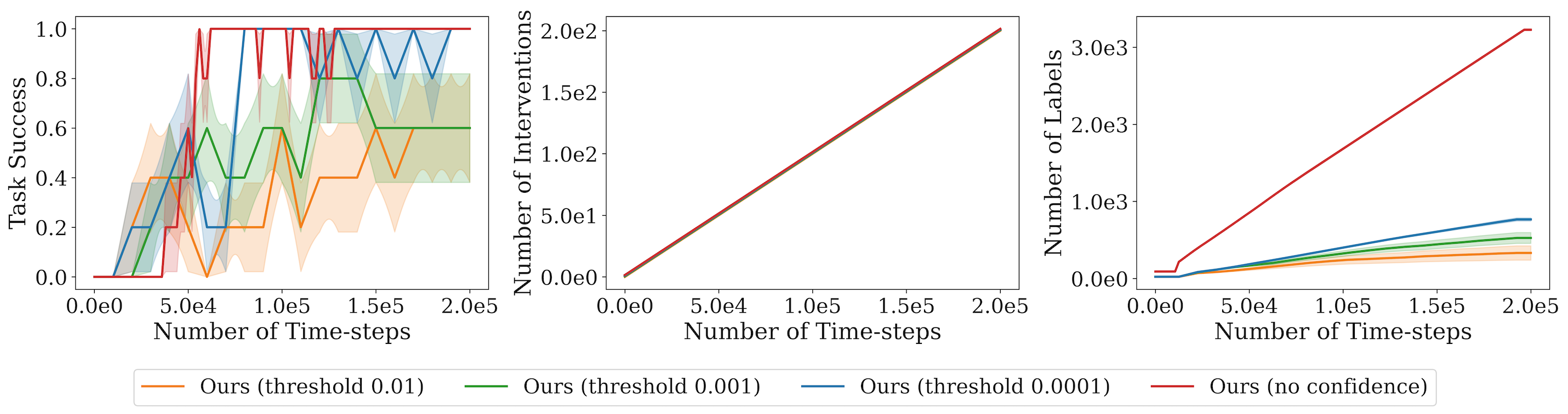}
    \caption{\small (\textit{top}) Querying based on the confidence of the classifier, i.e., when $|\hat{\mathcal{R}}_\rho(s) - 0.5| < p$ for $p \in \{ 0.1, 0.4 \}$. (\textit{bottom}) Querying based on the uncertainty of an ensemble of classifiers, i.e., when $\textsc{std}(\hat{\mathcal{R}}_\rho^i(s)) > p$ for $p \in \{ 0.01, 0.001, 0.0001 \}$.}
    \label{fig:confidence}
\end{figure}

\textbf{Noisy reversibility labels}. Thus far, our method has assumed the reversibility labels are noiseless. However, these labels in most practical settings come from human supervisors, who can inadvertently introduce noise into the labeling process. Hence, we simulate noisy reversibility labels in the maze environment, and design a \emph{robust} variant of our labeling scheme to account for possible noise in the labels. In the robust variant, in addition to querying the label for a state $s$, we also query the neighboring states, i.e., the sequence of $N$ states centered at $s$, and take the majority as the label for $s$.

We design two different noisy scenarios with (a) false positive labels and (b) false negative labels in the Maze environment. In (a), the trench regions can produce noisy labels. With probability $0.2$, the label is $1$, incorrectly labeling them as \emph{reversible}. In (b), the regions of the state space that neighbor the trenches can produce noisy labels. That is, with probability $0.2$, the label is $0$, incorrectly labeling them as \emph{irreversible}. In Fig.~\ref{fig:noisy_labels} (left and middle), we see that using a window size of $10$, PAINT succeeds $60\%$ of the time under false positive labels and $80\%$ under false negative labels, while smaller window sizes perform worse. The robust variant of PAINT can therefore tolerate some degree of label noise and only increase the number of reversibility labels required by a constant factor, thereby maintaining the same label complexity of $\mathcal{O}\left(N\log \left|\tau\right|_{\textrm{max}}\right)$ as before.

\begin{figure}
    \centering
    \includegraphics[height=1.65in]{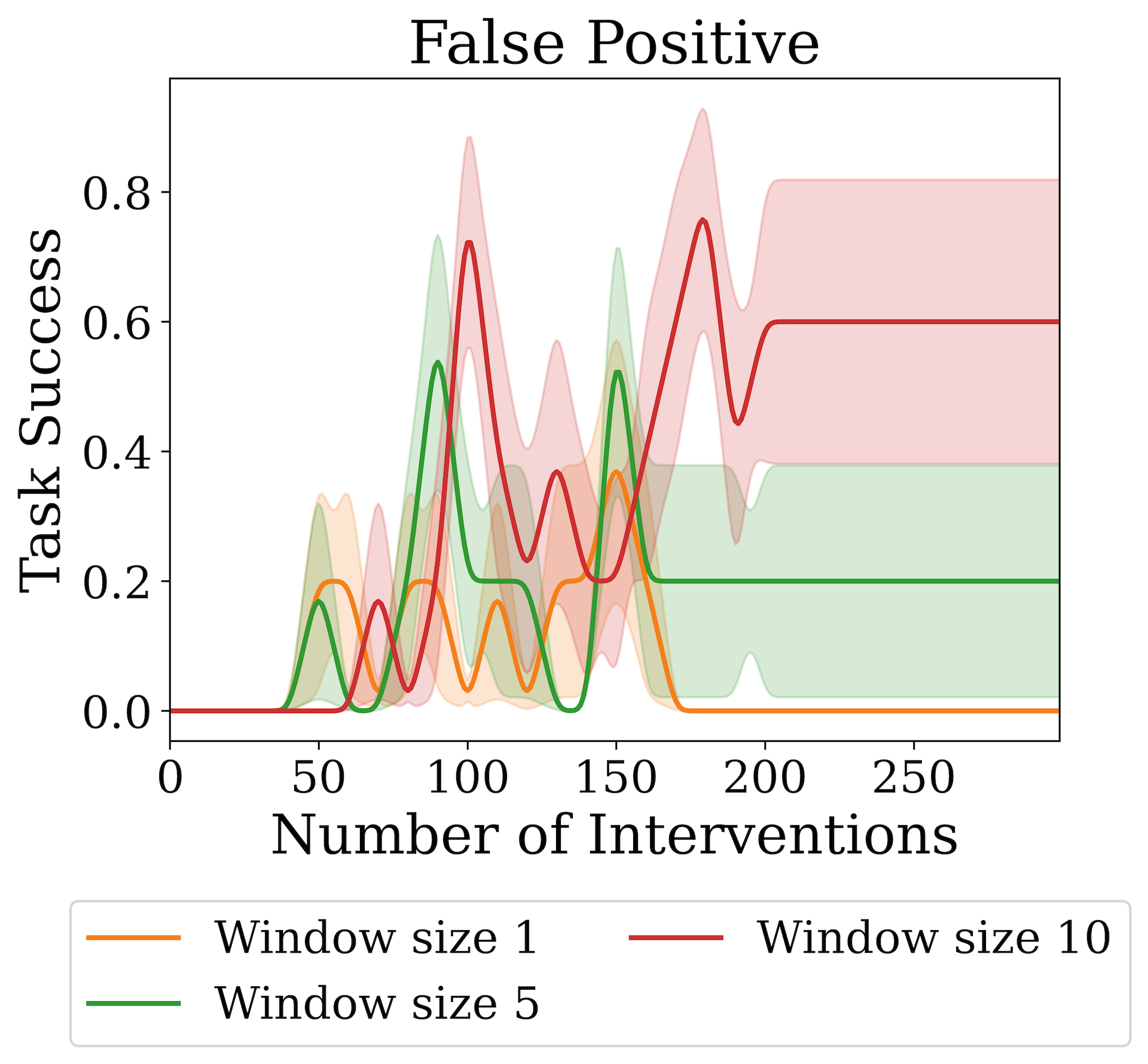}
    \includegraphics[height=1.65in]{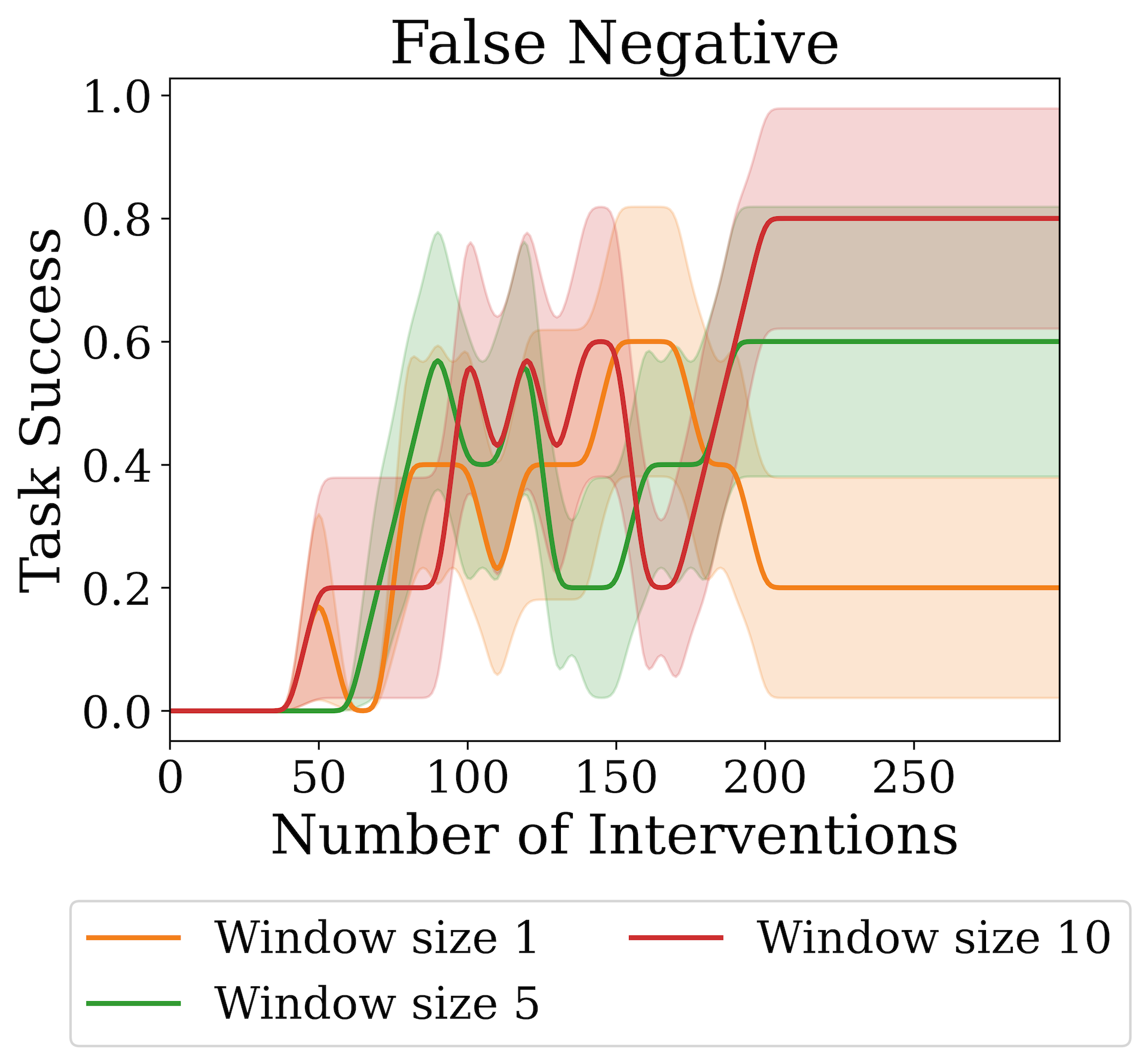}
    \includegraphics[height=1.65in]{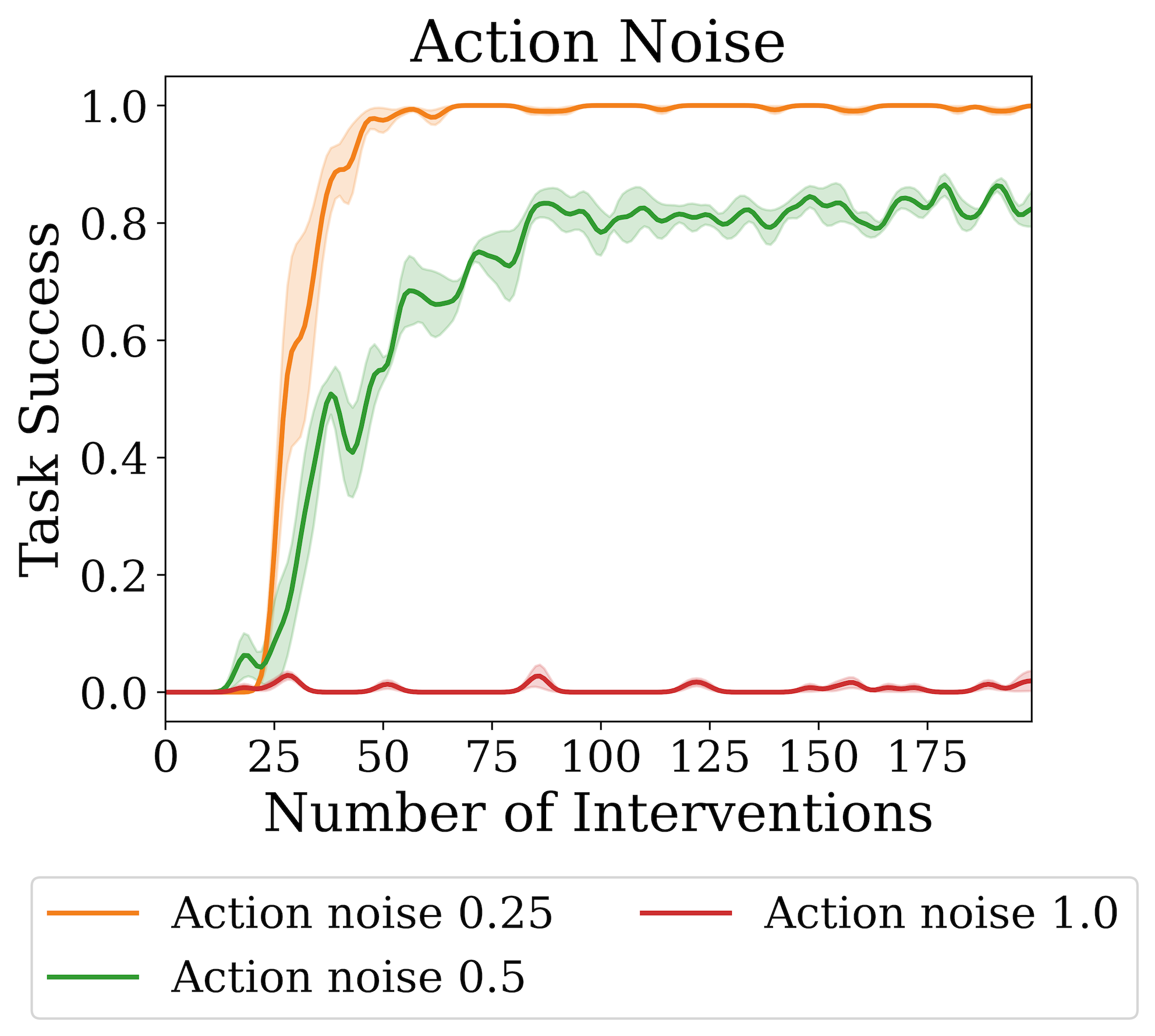}
    \caption{\small A robust variant of PAINT which queries neighboring states inside a window size of $1, 5,$ and $10$ with potential false positive labels (\textit{left}) and false negative labels (\textit{middle}). (\textit{right}) PAINT under varying noise added to the transitions.}
    \label{fig:noisy_labels}
\end{figure}

\textbf{Noisy transitions}. We also incorporate noise to the transition dynamics to further evaluate the robustness of PAINT. In the Maze environment, we add Gaussian noise of width $\sigma$ to the action before transitioning to the next state. The action space is $[-1.0, 1.0]^2$, and we evaluate on $\sigma$ values of $0.25, 0.5,$ and $1.0$. In Fig.~\ref{fig:noisy_labels} (right), we find that PAINT is robust up to a noise level of $0.5$. Unsurprisingly, when the noise is larger than the actions themselves, the agent fails to solve the task.

\end{document}